\definecolor{lightcoral}{HTML}{F08080}
\definecolor{plum2}{HTML}{DDA0DD}
\definecolor{gold}{HTML}{FFD700}
\DeclareMathOperator*{\argmax}{arg\,max}
\definecolor{Aquamarine}{RGB}{102,205,170}
\definecolor{Thistle}{RGB}{224,105,255}
\definecolor{Salmon}{RGB}{240,128,128}
\definecolor{Mygold}{RGB}{255,215,0}
\def\@copyrightspace{\relax}
\title[AAMAS-2025 Formatting Instructions]{Adaptive Bi-Level Multi-Robot Task Allocation and Learning under Uncertainty with Temporal Logic Constraints}
\author{Xiaoshan Lin}
\affiliation{
  \institution{University of Minnesota, Twin Cities}
  \city{Minneapolis}
  \country{United States}}
\email{lin00668@umn.edu}
\author{Roberto Tron}
\affiliation{
  \institution{Boston University}
  \city{Boston}
  \country{United States}}
\email{tron@bu.edu}
\begin{abstract}
This work addresses the problem of multi-robot coordination under unknown robot transition models, ensuring that tasks specified by Time Window Temporal Logic are satisfied with user-defined probability thresholds. We present a bi-level framework that integrates (i) high-level task allocation, where tasks are assigned based on the robots' estimated task completion probabilities and expected rewards, and (ii) low-level distributed policy learning and execution, where robots independently optimize auxiliary rewards while fulfilling their assigned tasks. To handle uncertainty in robot dynamics, our approach leverages real-time task execution data to iteratively refine expected task completion probabilities and rewards, enabling adaptive task allocation without explicit robot transition models. We theoretically validate the proposed algorithm, demonstrating that the task assignments meet the desired probability thresholds with high confidence. Finally, we demonstrate the effectiveness of our framework through comprehensive simulations.
\end{abstract}
\keywords{Multi-Robot Systems; Task Allocation; Temporal Logic; Reinforcement Learning}
\newcommand{\BibTeX}{\rm B\kern-.05em{\sc i\kern-.025em b}\kern-.08em\TeX}
\begin{document}
\newtheorem{assumption}{\bf{Assumption}}
\newtheorem{problem}{\bf{Problem}}
\newtheorem{remark}{\bf{Remark}}

\renewcommand{\thesubsubsection}{\Alph{subsubsection}.}


\pagestyle{fancy}
\fancyhead{}


\maketitle 


\section{Introduction}
Efficient and high-quality task allocation plays a critical role in cooperative multi-robot applications, such as on-demand ridesharing and delivery \cite{hyland2018dynamic}, assembly lines \cite{johannsmeier2016hierarchical}, and warehouse logistics \cite{yan2012multi}. Recently, multi-robot task allocation with temporal constraints has attracted growing attention due to its significance in time-sensitive applications. For instance, \cite{suslova2020multi} formulates a distributed constraint optimization problem to allocate tasks with time windows and ordering requirements to heterogeneous robots. 
\cite{choudhury2022dynamic} considers uncertainty in completing time-windowed tasks and combines high-level task allocation with low-level scheduling to minimize task incompletion. \cite{park2023risk} addresses tasks with precedence constraints, proposing an algorithm that ensures the probability of task failure remains below a user-specified threshold under uncertainties in robot traits. Research in this area typically considers tasks that can be executed independently. 

Multi-robot systems often involve more complicated scenarios where tasks are logically and temporally correlated. For instance, in warehouse logistics, robots need to coordinate to retrieve items from multiple storage locations based on orders, inventory, and replacement options, then assemble and deliver them within a set time frame, with each step depending on the successful completion of the previous one. Temporal logic (TL) provides a rigorous framework for specifying such temporal order and dependencies between tasks.

Different methods have been proposed to coordinate robots to satisfy a global temporal logic formula, including graph search on product automata \cite{schillinger2018simultaneous}, sampling-based methods \cite{luo2021abstraction, kantaros2020stylus}, and optimization-based approaches \cite{karaman2011linear, wolff2014optimization, buyukkocak2021planning}. 
Other works assign local temporal logic tasks to individual robots, employing methods such as receding horizon planning \cite{tumova2016multi}, path-finding on product graph \cite{gujarathi2022mt}, and integer programming-based planning \cite{sun2022multi}. 
Nevertheless, these studies do not address uncertainty, which is crucial for time-sensitive tasks. For instance, environmental factors like traffic jams can disrupt task execution within the required time window.

 Some prior studies \cite{cai2021modular, jiang2021temporal, hasanbeig2020cautious, aksaray2021probabilistically} employ reinforcement learning (RL) to maximize the probability of satisfying temporal logic constraints under unknown transition dynamics, primarily focusing on single-robot systems. Multi-agent reinforcement learning (MARL) has been explored for joint policy learning with temporal logic objectives \cite{hammond2021multi, wang2023multi, zhang2022distributed}. Additionally, semi-decentralized methods \cite{sun2020automata}, deep RL \cite{muniraj2018enforcing}, and model-based RL \cite{liu2023catlnet} have been explored to improve learning efficiency. However, these MARL approaches either do not explicitly address system model uncertainty or assume full system knowledge, limiting their applicability in uncertain environments. 

Overall, there are research gaps in existing multi-robot task allocation, planning, and learning approaches, as they either assume the tasks are independent, neglect the uncertainty in system models, or unrealistically presume full knowledge of uncertainties. Additionally, prior studies often aim to maximize robustness degree \cite{muniraj2018enforcing, wang2023multi}, minimize incomplete tasks \cite{choudhury2022dynamic}, or optimize some cost functions \cite{kantaros2020stylus, luo2021abstraction}. While these approaches are effective, they often lack quantifiable guarantees on task satisfaction. 

Motivated by these gaps, this paper addresses multi-robot task allocation to ensure a specified probability of satisfying Time Window Temporal Logic tasks under unknown transition dynamics. Additionally, a secondary objective is to maximize auxiliary reward functions for each robot based on user preferences. Fig.~\ref{fig:example} illustrates an example where robots must transport materials within specific time windows with high probability while also performing auxiliary tasks such as identifying and monitoring traffic-prone intersections or returning to stations for future assignments.

Our proposed formulation, which incorporates primary tasks and auxiliary objectives, has broader applicability in various scenarios, such as: \textbf{(i) Search-and-rescue:} Robots deliver supplies within time-critical windows while minimizing energy use or maximizing coverage to identify other areas needing help. \textbf{(ii) Environmental monitoring:} Drones monitor areas during specific intervals, such as wildlife peaks, while collecting additional data or conserving power for longer missions.  

In summary, this work makes the following contributions:
\begin{itemize}[noitemsep,topsep=5pt,parsep=5pt,partopsep=5pt, leftmargin=0.5cm]
\item We propose a framework for multi-robot task allocation under unknown robot transition probabilities, integrating high-level task allocation with low-level policy learning to ensure probabilistic satisfaction of time-window temporal logic tasks.

\item We incorporate auxiliary reward functions into high-level task allocation as a secondary objective, allowing the optimization of auxiliary rewards to impose user preferences without compromising the primary temporal logic tasks.

\item We use data-driven methods to estimate expected rewards and task satisfaction probabilities of robots while following their low-level policies, which enable adaptive high-level task allocation using these refined reward functions and satisfaction probabilities without requiring explicit robot transition models.
\end{itemize}

The most relevant works in the literature are \cite{schillinger2019hierarchical, lin2024probabilistic, aksaray2021probabilistically, choudhury2022dynamic, sun2020automata, park2023risk}. \cite{choudhury2022dynamic} employ a similar bi-level framework but assumes independent tasks with single time windows and known task completion probabilities,  whereas we consider complex spatio-temporal dependencies and unknown probabilities. 
In \cite{park2023risk}, the Sequential Probability Ratio Test (SPRT) is used to verify whether the probability of task failure remains below a user-defined threshold. This approach iteratively recomputes task assignments until passing the SPRT, whereas we formulate an optimization problem whose solution guarantees probabilistic task satisfaction. Both \cite{aksaray2021probabilistically} and \cite{lin2024probabilistic} consider similar problem formulations, aiming to satisfy a temporal logic task with a desired probability.
While these works focus on a single agent with a single task, we address multi-robot systems with multiple tasks.  \cite{sun2020automata} considers a similar communication model in which each agent interacts with a centralized coordinator.
However, their approach involves the coordinator in lower-level policy learning, whereas the coordinator is solely used for high-level task allocation in our framework. \cite{schillinger2019hierarchical} uses RL to estimate uncertain task execution durations and incorporates the learned durations into an auction-based allocation policy to minimize execution time. Similarly, our approach uses RL to estimate the reward functions and improve task allocation. Our work specifically focuses on satisfying tasks within exact time windows with desired probabilities, which their work does not address.


\section{Preliminaries}
 Let $\Sigma$ be a finite set. We denote the power set of $\Sigma$ by $2^\Sigma$. A finite or infinite sequence of elements from $\Sigma$ is called a word over $\Sigma$. In this context, $\Sigma$ is also called an alphabet. 
 Let $k, i, j \in \mathbb{Z}_{\geq0}$ with $i \leq j$. The $k$-th element of a word $\boldsymbol{\sigma}$ is denoted by $\boldsymbol{\sigma}_k$, and the subword $\boldsymbol{\sigma}_i$ , ... , $\boldsymbol{\sigma}_j$ is denoted by $\boldsymbol{\sigma}_{i, j}$. We denote the set of atomic propositions by $AP$.

 \begin{figure}[t]
  \centering
   \includegraphics[width=0.85\linewidth]{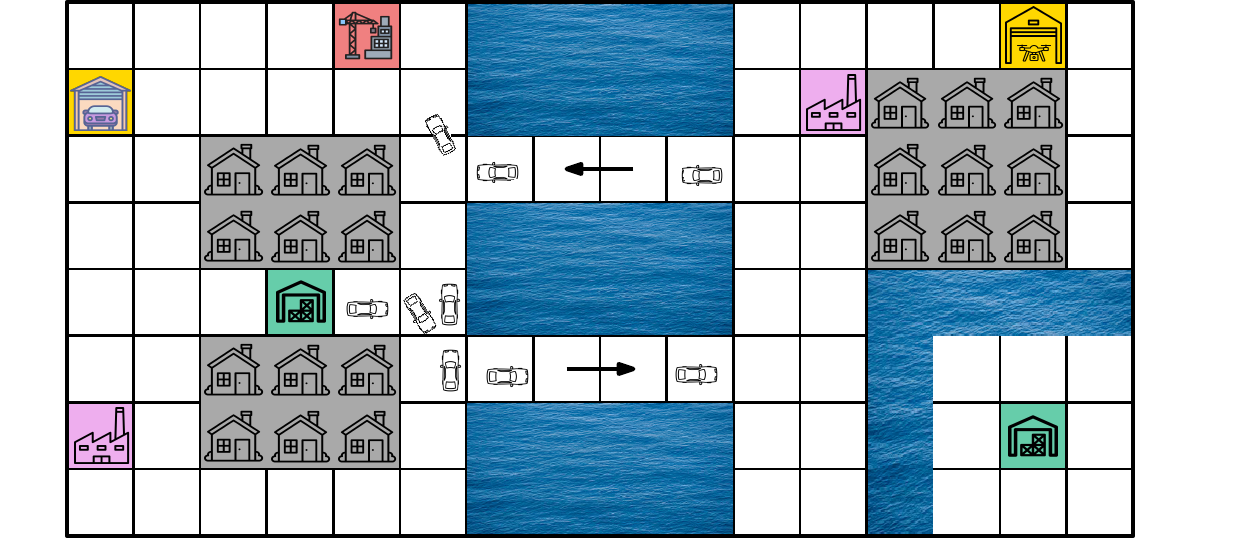}
  \caption{
  Motivating example of our proposed framework. Primary tasks:
  repeatedly transporting resources from \textcolor{Aquamarine}{warehouses} to \textcolor{Thistle}{processing stations} and then to \textcolor{Salmon}{operation site} within specified time windows, which must be satisfied with a high probability to prevent resource accumulation. Auxiliary tasks
  : monitoring traffic congestion to improve future routing and delivery times, or returning to \textcolor{Mygold}{stations} after completing the primary tasks.}
  \label{fig:example}
  \Description{Logo of AAMAS 2025 -- The 24th International Conference on Autonomous Agents and Multiagent Systems.}
\end{figure}

\subsection{Time Window Temporal Logic}
Time Window Temporal Logic (TWTL) \cite{twtl} is a language for expressing time-bounded specifications. 
A TWTL formula is defined over a set of atomic propositions $AP$ as follows:
    \begin{equation*}
        \phi ::= H^ds\,\,|\,\, H^d\neg s \,\,|\,\, \neg\phi_1 \,\,|\,\, \phi_1 \land \phi_2 \,\,|\,\, \phi_1 \lor \phi_2 \,\,|\,\, \phi_1 \cdot \phi_2 \,\,|\,\, [\phi_1]^{[a,b]}.
    \end{equation*}
Here, $s$ represents either the constant ``true" or an atomic proposition in $AP$;  $\phi_1$ and $\phi_2$ are TWTL formulas. The hold operator $H^ds$, with $d \in \mathbb{Z}_{\geq0}$, specifies that $s \in AP$ should hold for d time units. The negation operator $\neg\phi_1$ specifies ``do not satisfy the formula''. The conjunction operator $\phi_1 \land \phi_2$ and disjunction operator $\phi_1 \lor \phi_2$ specify ``satisfy both formulas'', and ``satisfy at least one formula'', respectively. The concatenation operator $\phi_1 \cdot \phi_2$ specifies that $\phi_1$ must be satisfied first, and $\phi_2$ must be satisfied immediately after. The within operator $[]^{[a,b]}$, where $a, b \in \mathbb{Z}_{\geq0}$ and $a \leq b$, restricts the satisfaction of $\phi$ to the time window $[a, b]$. The \textbf{\textit{time bound}} of a TWTL formula $\phi$, denoted as $\|(\phi)\|$, represents the maximum time allowed to satisfy $\phi$. For a formal definition of the TWTL semantics and the time bound, we refer readers to \cite{twtl}.

\begin{definition} (Deterministic Finite-State Automaton)
A deterministic finite-state automaton (DFA) is a tuple $\mathcal{A} = (Q, q_0, \Sigma, \delta, F),$ where $Q$ is a finite set of states, $q_0$ is the initial state, $\Sigma=2^{AP}$ is the input alphabet, $\delta : Q \times \Sigma \rightarrow Q$ is the transition function, and $F$ is the set of accepting states.
\end{definition}

 A finite input word $\boldsymbol{\sigma}= \sigma_0, \sigma_1, \dots, \sigma_T$ over the alphabet $2^{AP}$ generates a trajectory $\boldsymbol{q} = q_0, q_1, \dots, q_{T+1}$ on the DFA, where $q_0$ is the initial state of the DFA and $q_{k+1} = \delta(q_k, \sigma_k)$ for all $k \geq 0$. A finite input word $\boldsymbol{\sigma }$ over $\Sigma$ is accepted by a DFA if the corresponding trajectory $\boldsymbol{q}$ ends in an accepting state of the DFA. 

A TWTL formula $\phi$ can be translated into a DFA that either accepts or rejects an input word \cite{twtl}. An input word $\boldsymbol{\sigma }$ is said to satisfy the corresponding TWTL formula $\phi$ if it is accepted by the DFA, denoted as $\boldsymbol{\sigma} \models \phi$. 
For example, the TWTL formula $\phi=[H^1 P]^{[1,2]} \cdot [H^0 D]^{[0,2]}$, specifying a pickup ($P$) and hold for one time step within time window [1, 2] followed by a delivery ($D$) within two time steps, is translated into a DFA shown in Fig.~\ref{fig:normal_dfa}. An input word $\boldsymbol{\sigma}= \{P\}, \{P\}, \{P\}, \{D\}$ satisfies the TWTL formula, as it generates a trajectory $\boldsymbol{q}= q_0, q_1, q_2, q_3, q_6$ that ends in the accepting state $q_6$. This TWTL formula can also be translated into a temporally relaxed DFA (see Fig.~\ref{fig:relaxed_dfa}), which is more compact and facilitates more efficient computation and construction.

\begin{figure}[t]
    \centering
    \begin{subfigure}[t]{0.29\linewidth}
    \includegraphics[width=\linewidth]{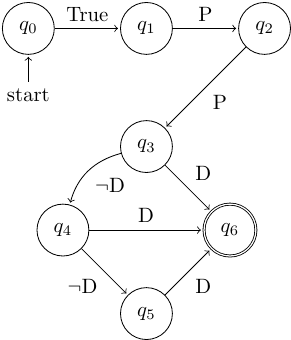}
        \caption{DFA}
        \label{fig:normal_dfa}
    \end{subfigure}
    \hspace{9pt}
    \begin{subfigure}[t]{0.22\linewidth}
    \includegraphics[width=\linewidth]{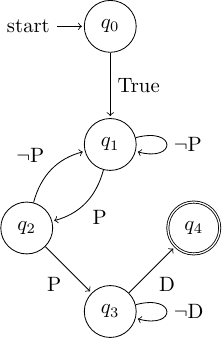}
        \caption{Relaxed DFA}
        \label{fig:relaxed_dfa}
    \end{subfigure}
    \hspace{9pt}
    \begin{subfigure}[t]{0.29\linewidth}
    \includegraphics[width=\linewidth]{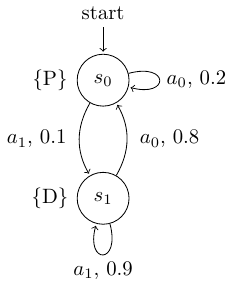}
        \caption{Labeled MDP}
        \label{fig:mdp}
    \end{subfigure}
    \caption{(a) Normal DFA for TWTL formula $\phi=[H^1 P]^{[1,2]} \cdot [H^0 D]^{[0,2]}$. (b) DFA for the temporally relaxed TWTL formula $\phi(\boldsymbol{\tau})=[H^1 P]^{[1,2+\tau_1]} \cdot [H^0 D]^{[0,2+\tau_2]}$. (c) Example of a labeled MDP, where $S = \{s_0, s_1\}$, $A = \{a_0, a_1\}$, $AP = \{P, D\}$, $l(s_0) = \{P\}$ and $l(s_1) = \{D\}$.}
\label{fig:graph}
\end{figure}
\subsection{Markov Decision Process}

\begin{definition}
(Labeled MDP) A labeled MDP is a tuple $\mathcal{M}=(S,A,\Delta,R,l)$, where $S$ represents the state space, and $A$ denotes the set of actions. The probabilistic transition function is given by $\Delta: S \times A \times S \rightarrow [0,1]$, and the reward function is defined as $R: S \times A \rightarrow \mathbb{R}$. Additionally, $l: S \rightarrow 2^{AP}$ is a labeling function that maps each state to a set of atomic propositions.
\end{definition}

\begin{definition}[Policy]
A policy over the MDP is a mapping $\pi: S \rightarrow P(A)$, where $P(A)$ represents the set of probability distributions over the actions set $A$. A policy is stationary if $\pi(\cdot | s) \in P(A)$ does not change over time, and it is deterministic if $\pi(\cdot | s)$ always selects the same action for a given state.
\end{definition}

MDPs with labeling functions are widely used in RL with temporal logic \cite{fu2014probably, neider2021advice, cai2021reinforcement} to allow verification of temporal logic constraints. Given a finite MDP trajectory $\boldsymbol{s} = s_0, s_1, \dots, s_T$, the labeling function generates an output word $\boldsymbol{o} = o_0, o_1, \dots, o_T$ where $o_i = l(s_i)$.  A subword of the output, $o_{t_1}, o_{t_1+1}, \dots, o_{t_2}$, is denoted as $\boldsymbol{o}_{t_1, t_2}$. Consider the labeled MDP example shown in Fig.~\ref{fig:mdp}. An MDP trajectory $\boldsymbol{s} = s_0, s_0, s_0, s_1$ generates an output word $\boldsymbol{o} = \{P\}, \{P\}, \{P\}, \{D\}$, which serves as an input word to the DFA to verify whether the robot satisfies the TWTL formula.

\subsection{System Setup}
We consider a multi-robot system with a set of $N$ robots, denoted as $\{r_i\}_{i \in \{1, \dots, N\}}$, and we refer to this set as $\{N\}$ for simplicity. Similarly, we define a set of $K$ tasks, denoted by $\{t_k\}_{k \in \{1, \dots, K\}}$, and refer to it as $\{K\}$. Each task $t_k \in \{K\}$ is defined by a tuple $(\phi_k, P_k)$, where $\phi_k$ is a TWTL formula and $P_k$ is the desired probability threshold for satisfying $\phi_k$. The time bound for these tasks is $T = \|\phi_k\|$ time steps, and they must be fulfilled every $T$ time steps over an infinite horizon or a long period. Each repetition of $T$ time steps is called an \textbf{\textit{episode}}. We introduce a virtual task \( t_{K+1} = (\phi_{K+1}, P_{K+1}) \), where \( \phi_{K+1} \) is a null task and \( P_{K+1}=0 \), to augment the task set. A robot assigned to task \( t_{K+1} \) is considered unassigned to any TWTL task, allowing it to focus on its auxiliary objective. We denote this augmented task set as \( \{K+1\} \).

We assume that the robots in $\{N\}$ have decoupled dynamics and reward functions, and operate independently without interactions with others. 
Each robot $r_i \in \{N\}$ is modeled as a decoupled MDP, denoted by $\mathcal{M}^i = (S^i, A^i, \Delta^i, R^i, l^i)$. The collection of all robot MDPs $\{\mathcal{M}^i\}_{i \in \{N\}}$ is referred to as $\{\mathcal{M}\}$. The state transition probabilities $\Delta^i$ are unknown and account for uncertainties such as movement delays due to road conditions or potential robot failures. The reward function \( R^i \) encodes the robot's individual auxiliary objectives.

The output word of robot $r_i$ over a time interval $[t_1, t_2]$ is denoted as $\boldsymbol{o}^i_{t_1, t_2}$. \textbf{\textit{Team output word}}, collection of output words from all robots $\{\boldsymbol{o}^i_{t_1, t_2}\}_{i\in\{N\}}$, is given by $\boldsymbol{o}_{t_1, t_2}$. Given a TWTL formula $\phi$, we say $\{\boldsymbol{o}_{t_1, t_2}\} \models \phi$ if and only if there exists at least one robot $r_i$ such that $\boldsymbol{o}^i_{t_1, t_2} \models \phi$. The probability that $\{\mathbf{o}_{t_1, t_2}\}$ satisfies a TWTL formula $\phi$ is denoted as $\Pr\big(\{\mathbf{o}_{t_1, t_2}\} \models \phi  \big)$. 

Guaranteeing probabilistic constraint satisfaction is unachievable without prior knowledge of the MDP transition probabilities. In this work, we assume that although the exact transition probabilities are unknown, the agent has partial knowledge: when taking action $a$ at state $s$, the agent knows (i) which states have a non-zero probability of being reached and (ii) which states have a sufficiently high probability of being the next state. This partial knowledge is represented by $\epsilon^i$, an upper bound on transition uncertainty, which is crucial for computing a lower bound on the probability of task satisfaction. We formalize this assumption as follows:

\begin{assumption}\label{assumption} 
Given an MDP $\mathcal{M}^i$ for an arbitrary robot $r_i$, the exact transition probability $\Delta^i$ is unknown. However, a known parameter $\epsilon^i \in [0,1)$ exists such that, for each state $s$ and action $a$, (i) the set of states $s'$ for which $\Delta^i(s, a, s') > 0$ and (ii) the set of states $s''$ for which $\Delta^i(s, a, s'') \geq 1 - \epsilon^i$ are non-empty and known. 
\end{assumption}


\begin{example}\label{example:transition}
    Consider a probabilistic transition model shown in Fig.~\ref{fig:transitions}, where the action set is given by $A = \{N,NE,E,SE,S,$ $SW,W, NW, Stay\}$. Under the `Stay' action, robots remain stationary, while all other actions result in an intended transition (black arrows) with probability $1-\epsilon$ and unintended transitions (red arrows) with a probability of $\epsilon$. The exact values of $\epsilon$ are unknown to robots. However, each robot $r_i$ is provided with an estimate $\epsilon^i$, which overestimates the actual $\epsilon$ and reflects the robots' limited knowledge of their transition models and environmental disturbances, as stated  Assumption~\ref{assumption}.
\end{example}


\section{Problem Formulation}
Given a multi-robot system $\{N\}$ and a set of tasks $\{K+1\}$, the objective is to find an optimal policy for each MDP in $\{\mathcal{M}\}$ to ensure that the team output word $\{\boldsymbol{o}_{jT, \,jT+T}\}$ satisfies each TWTL formula $\phi_k$ with at least probability $P_k$ while maximizing the sum of rewards. With unknown MDP transitions, this can be formulated as a constrained RL problem, with the probabilistic satisfaction of TWTL tasks encoded as constraints, and reward maximization as the objective.
This problem presents multiple challenges, including coordinating robots under uncertain dynamics, ensuring probabilistic satisfaction of TWTL tasks, and maximizing individual rewards. 

We propose a bi-level solution to address these challenges hierarchically. At the high level, we solve for multi-robot task allocation to ensure desired probabilistic satisfaction of the TWTL tasks. Specifically, we consider task allocation $\Pi: \{N\} \times \{K + 1\} \rightarrow [0, 1]$, where $\Pi(i, k)$ represents the probability of robot $r_i$ being assigned to task $t_k$. At the low level, each robot independently finds its own policy $\pi^{\Pi}_{i}$, to satisfy its assigned task while simultaneously maximizing its individual reward. We formally define the problem as below.

\begin{problem}\label{problem}     
    Given a set of robots $\{N\}$, a set of TWTL tasks $\{K+1\}$, a set of MDP $\{\mathcal{M}\}$ subject to Assumption 1, and a discount factor $\gamma$, determine the optimal high-level task allocation and low-level policies 
        \begin{equation}
           \Pi^*, \pi_1^*, \dots, \pi_N^*  = \argmax_{\Pi, \pi_1^\Pi, \dots, \pi_N^\Pi} \mathbb{E}_{\left\{a_t^i\sim \pi_i^\Pi\right\}} \left[ \sum_{t=0}^{\infty} \gamma^t \sum_{i=1}^N R^i(s_t^i, a_{t}^i)  \right],
        \end{equation}
        such that, for every episode j,
        \begin{equation}\label{problem_constraint}
        \begin{aligned} 
            \Pr\big(\{\mathbf{o}_{jT,jT + T}\} \models \phi_k \big) \color{black} & \geq P_k, \quad \forall j\geq0, (\phi_k, P_k) \in \{K+1\} 
        \end{aligned}
        \end{equation}
        \noindent where $\{\mathbf{o}_{jT,jT + T}\}$ is the team output word in episode j. 
\end{problem}

\begin{remark}
    The reward functions can incorporate different auxiliary objectives into task allocations based on system requirements: \textbf{(i) Environment-driven vs. user-defined:} robots can learn rewards from environmental interactions (e.g., identifying high-traffic areas) or follow predefined incentives (e.g., encouraging timely returns to charging stations). \textbf{(ii) Robot-specific objectives:} reward functions can be designed to reflect the distinct roles of different robots (e.g. aerial robots prioritize large area coverage, while mobile robots remain in critical locations to improve future task efficiency). \textbf{(iii) Capability-aware rewards:} rewards encode robot capabilities, such as higher rewards for drones with superior sensors when monitoring key areas.
 
\end{remark}

\section{Proposed Solution}

 Our approach consists of a high-level coordinator for task allocation and a low-level task execution mechanism. The high-level coordinator considers each robot’s probability of satisfying TWTL tasks from its current state and the expected rewards associated with specific tasks. It computes task assignments that maximize the total expected reward while ensuring that each task is completed with the desired probability threshold. At the low level, each robot maintains \( K+1 \) policies: \( K \) stationary policies, one for each TWTL task, and an additional policy learned over time to maximize individual rewards when not assigned to a TWTL task. Upon receiving an assignment from the high-level coordinator, each robot follows the corresponding policy for its designated task.

\subsection{High-level Multi-robot Task Assignment}

Before each episode, each robot provides the coordinator with its $P^{\epsilon}_{i, k}(p_i)$, the expected probability of satisfying TWTL formula $\phi_k$, and $V_{i, k}(p_i)$, the expected reward under its current policy. Here, \( p_i \) denotes the current state of the robot. For simplicity, we refer to these quantities as \( P^{\epsilon}_{i, k} \) and \( V_{i, k} \) in the following sections.

For now, we assume that $P^{\epsilon}_{i, k}$ and $V_{i, k}$ are known to the robots. The process of obtaining these values will be discussed in later sections. Given a set of tasks $\{K+1\}$, the high-level task assignment for robot $r_i \in \{N\}$ is represented by the probability distribution $\{P_{i,k}\}_{k=1, \dots, K + 1}$, where $\sum_{k=1}^{K + 1} P_{i,k} = 1$. For $k = 1, 2, \dots, K$, $P_{i,k}$ denotes the probability of robot $r_i$ being assigned to TWTL task $t_k$, while $P_{i, K+1}$ denotes the probability that robot $r_i$ prioritizes maximizing its reward function instead of executing a TWTL task. Following the multi-robot task allocation taxonomy proposed by \cite{gerkey2004formal}, our task allocation problem falls under ST-SR-IA, i.e. single-task robots (ST) execute single-robot tasks (SR), with instantaneous assignments (IA). Note that although the stochastic task allocation allows multiple robots to be assigned to the same task, we assume they execute tasks in parallel rather than collaboratively. Therefore, tasks are classified as single-robot rather than multi-robot. The high-level task allocation is determined by solving the following optimization problem.

 \begin{subequations} \footnotesize\label{eqn:optimization}
\begin{align}
    \arg \max_{P_{i,k}} \quad & \sum_{i=1}^{N} \sum_{k=1}^{K+1} P_{i,k} \cdot V_{i, k} \label{eqn:obj}\\
   \text{subject to} 
    \quad & 1 - \prod_{i=1}^{N} (1 - P_{i,k} \cdot P^{\epsilon}_{i, k})\geq P_k \quad \forall k = 1, 2, \dots, K \label{eqn:constraint1}\\
    & \sum_{k=1}^{K+1} P_{i,k} = 1 \quad \forall i = 1, 2, \dots, N \label{eqn:constraint2}\\
    & P_{i,k} \geq 0 \quad \forall i = 1, 2, \dots, N, k  = 1, 2, \dots, K+1 \label{eqn:constraint3}
\end{align}
\end{subequations}

In problem \eqref{eqn:optimization}, the objective function (\ref{eqn:obj}) aims to maximize the sum of the expected rewards across all agents. Constraint (\ref{eqn:constraint1}) ensures that the probability of at least one robot satisfying task $t_k$ is not less than its desired probability $P_k$. Constraints (\ref{eqn:constraint2}) and (\ref{eqn:constraint3}) ensure that the task assignment $\{P_{i, k}\}$ define a valid probability distribution over tasks for each robot. Although the assignments are stochastic, each robot selects and executes only one task at a time, sampled from the task assignment. Unlike deterministic solutions commonly used in multi-agent task allocation, our approach allows robots to optimize auxiliary rewards in addition to satisfying TWTL tasks. Notably, deterministic task allocation is a special case of our framework, occurring when $P_{i, k}$ are either 1 or 0.

 The probability \( P^{\epsilon}_{i,k} \) in \eqref{eqn:constraint1} 
 is unknown due to the lack of knowledge about the transition probabilities in the MDPs. One solution is to substitute \( P^{\epsilon}_{i,k} \) with its lower bound. The key question is whether solving problem \eqref{eqn:optimization} using lower bounds of \( P^{\epsilon}_{i,k} \) still ensures satisfaction of the original constraints. To answer this, we present the following proposition.

\begin{proposition}\label{proposition}
Let $\lfloor{P}^{\epsilon}_{i,k}\rfloor$ be an arbitrary lower bound for ${P}^{\epsilon}_{i,k}$, that is, $ 0 \leq \lfloor{P}^{\epsilon}_{i,k}\rfloor\leq {P}^{\epsilon}_{i,k} \leq 1$. If a set of probabilities $\{P_{i,k}\}$ satisfies 
\begin{equation}\footnotesize
    1 - \prod_{i=1}^{N} (1 - P_{i,k} \cdot \lfloor{P}^{\epsilon}_{i,k}\rfloor)\geq P_k \quad \forall k = 1, 2, \dots, K, 
\end{equation}
then $\{P_{i,k}\}$ also satisfies constraint \eqref{eqn:constraint1}.
\end{proposition}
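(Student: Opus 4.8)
The plan is to show that replacing each $P^{\epsilon}_{i,k}$ by a smaller value $\lfloor P^{\epsilon}_{i,k}\rfloor$ can only decrease the left-hand side of the constraint, so that if the constraint holds with the lower bounds it holds a fortiori with the true values. First I would fix an arbitrary task index $k \in \{1,\dots,K\}$ and work with the single inequality for that $k$; the product over $i$ is the only nontrivial structure to handle. Since $P_{i,k} \geq 0$ and $0 \leq \lfloor P^{\epsilon}_{i,k}\rfloor \leq P^{\epsilon}_{i,k} \leq 1$, each factor satisfies $0 \leq P_{i,k}\cdot\lfloor P^{\epsilon}_{i,k}\rfloor \leq P_{i,k}\cdot P^{\epsilon}_{i,k} \leq 1$, hence
\begin{equation*}
  0 \;\leq\; 1 - P_{i,k}\cdot P^{\epsilon}_{i,k} \;\leq\; 1 - P_{i,k}\cdot\lfloor P^{\epsilon}_{i,k}\rfloor \;\leq\; 1 \qquad \forall i.
\end{equation*}

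The key step is then a monotonicity argument for products of numbers in $[0,1]$: if $0 \leq a_i \leq b_i \leq 1$ for all $i$, then $\prod_{i=1}^N a_i \leq \prod_{i=1}^N b_i$. I would prove this by a one-line induction on $N$ (the base case is trivial, and the inductive step uses $\prod_{i=1}^{N} a_i = a_N \prod_{i=1}^{N-1} a_i \leq b_N \prod_{i=1}^{N-1} a_i \leq b_N \prod_{i=1}^{N-1} b_i$, where the first inequality uses $0\le \prod_{i=1}^{N-1}a_i$ and $a_N\le b_N$, and the second uses $b_N\ge 0$ together with the inductive hypothesis). Applying this with $a_i = 1 - P_{i,k}\cdot P^{\epsilon}_{i,k}$ and $b_i = 1 - P_{i,k}\cdot\lfloor P^{\epsilon}_{i,k}\rfloor$ gives
\begin{equation*}
  \prod_{i=1}^{N}\bigl(1 - P_{i,k}\cdot P^{\epsilon}_{i,k}\bigr) \;\leq\; \prod_{i=1}^{N}\bigl(1 - P_{i,k}\cdot\lfloor P^{\epsilon}_{i,k}\rfloor\bigr),
\end{equation*}
and negating and adding $1$ yields
\begin{equation*}
  1 - \prod_{i=1}^{N}\bigl(1 - P_{i,k}\cdot P^{\epsilon}_{i,k}\bigr) \;\geq\; 1 - \prod_{i=1}^{N}\bigl(1 - P_{i,k}\cdot\lfloor P^{\epsilon}_{i,k}\rfloor\bigr) \;\geq\; P_k,
\end{equation*}
where the last inequality is the hypothesis. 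Since $k$ was arbitrary, constraint \eqref{eqn:constraint1} holds, completing the proof.

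There is no real obstacle here; the statement is essentially a monotonicity fact. The only point requiring a little care is making sure all factors stay in $[0,1]$ so the product inequality is valid (this is exactly where $P_{i,k}\ge 0$ from \eqref{eqn:constraint3}, $P^{\epsilon}_{i,k}\le 1$, and $\lfloor P^{\epsilon}_{i,k}\rfloor \ge 0$ are used), and stating the product-monotonicity lemma cleanly rather than hand-waving it. I would keep the write-up to the displayed chain above plus the short induction, without belaboring the routine algebra.
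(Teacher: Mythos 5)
Your proof is correct and follows essentially the same route as the paper's: bound each factor termwise using $\lfloor P^{\epsilon}_{i,k}\rfloor \leq P^{\epsilon}_{i,k}$, conclude the product inequality, and flip signs to recover constraint \eqref{eqn:constraint1}. The only difference is that you make explicit the product-monotonicity lemma and the nonnegativity of the factors, which the paper leaves implicit; this is a harmless (and slightly more careful) elaboration rather than a different argument.
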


\begin{proof}
Since ${P}^{\epsilon}_{i,k} \geq \lfloor{P}^{\epsilon}_{i,k}\rfloor$, it follows that:
\begin{equation}\footnotesize
    \prod_{i=1}^{N} (1 - P_{i,k} \cdot {P}^{\epsilon}_{i,k}) \leq \prod_{i=1}^{N} (1 - P_{i,k} \cdot \lfloor{P}^{\epsilon}_{i,k}\rfloor).
\end{equation}

\begin{equation}\footnotesize
    \implies 1 - \prod_{i=1}^{N} (1 - P_{i,k} \cdot {P}^{\epsilon}_{i,k}) \geq 1 - \prod_{i=1}^{N} (1 - P_{i,k} \cdot \lfloor{P}^{\epsilon}_{i,k}\rfloor).
\end{equation}

\end{proof}

This proposition guarantees that replacing \( P^{\epsilon}_{i,k} \) with its lower bound in \eqref{eqn:optimization} preserves the original task satisfaction constraints \eqref{eqn:constraint1}.

\subsection{Low-level Single-Agent Task Execution}
Upon receiving its task assignment \( \{P_{i,k}\}_{k=1,2,\ldots,K+1} \) from the high-level coordinator, robot $r_i$ samples from this probability distribution to determine which task it will execute in the upcoming episode.
The robot's policy during the episode depends on whether the selected task $t_k$ is a TWTL task ($1 \leq k \leq K$), or the auxiliary task focused on maximizing the robot’s individual reward  ($k = K+1$).

\subsubsection{Single-Agent Policy for TWTL Tasks ($1 \leq k \leq K$)} In this case, the robot follows a stationary policy that maximizes the probability of satisfying the TWTL task \( t_k \). We discuss the methodology for deriving this policy in the following section.

\begin{definition}
    \label{def:aug-pa}
    (Product MDP) Given a labeled MDP $\mathcal{M} = (S,A,\Delta,$ $R, l)$ and an DFA $\mathcal{A} = (Q, q_0, \Sigma, \delta,$ $F)$, a product MDP is a tuple $\mathcal{P} =\mathcal{M} \times \mathcal{A} = (S^{\otimes},S^{\otimes}_{0},A^{\otimes},\Delta^{\otimes}, R^{\otimes}, F^{\otimes})$, where $S^{\otimes} = S \times Q$ is a finite set of states; $S^{\otimes}_{0}=\{(s,\delta(q_0, l(s)))\, | \,\forall s \in S\}$ is the set of initial states; $A^{\otimes}=A$ is the set of actions; $\Delta^{\otimes}: S^{\otimes} \times A^{\otimes} \times S^{\otimes} \rightarrow [0,1]$ is the probabilistic transition relation such that for any two states, $p=(s,q) \in S^{\otimes}$ and $p'=(s^{\prime},q^{\prime}) \in S^{\otimes}$, and any action $a \in A^{\otimes}$, $\Delta^{\otimes}(p,a,p')=\Delta(s,a,s^\prime)$ if $\delta(q,l(s))=q^\prime$; $R^{\otimes}:S^{\otimes} \rightarrow \mathbb{R}$ is the reward function such that $R^{\otimes}(p) = R(s)$ for $p=(s,q) \in S^{\otimes}$; $F^{\otimes} = (S \times F) \subseteq S^{\otimes}$ is the set of accepting states.
\end{definition}  

    To maximize the probability of reaching $F^{\otimes}$ from any state of the product MDP, we can select the action that is most likely to minimize the expected distance to any accepting states. Inspired by \cite{lin2024probabilistic, aksaray2021probabilistically}, we define $\epsilon$-stochastic transitions and distance-to-$F^{\otimes}$.

\begin{definition}[$\epsilon$-Stochastic Transitions]
For a given product MDP and some $\epsilon \in [0,1)$, any transition $(p,a,p')$ such that $\Delta^{\otimes}(p,a,p')$ $\geq 1-\epsilon$ is defined as an $\epsilon$-stochastic transition.   
\end{definition}

This definition corresponds to Assumption \ref{assumption}, where transitions with sufficiently high probabilities (\(\geq 1 - \epsilon\)) are labeled as \(\epsilon\)-stochastic transitions. While this is a general definition, for each robot \( r_i \), \(\epsilon\) should be replaced with the robot-specific parameter \(\epsilon^i\).

\begin{definition}
    [Distance-To-$F^{\otimes}$]
    Given a product MDP, the distance from any state $p \in S^{\otimes}$ to the set of accepting states $F^{\otimes}$ is 
    \begin{equation}\label{dist-to-F}
        D^\epsilon(p) = \min\limits_{p' \in F^{\otimes}} dist^\epsilon(p,p'),
    \end{equation} 
    where $dist^\epsilon(p,p')$ represents the minimum number of $\epsilon$-stochastic transitions to move from $p$ to another state $p'$. 
\end{definition}

$D^\epsilon(p)$ represents the minimum number of $\epsilon$-stochastic transitions needed to move from state $p$ to the set of accepting states.


\begin{definition}
[$\pi^{\epsilon}$ Policy] Given a product MDP and $\epsilon \in [0,1)$, $\pi^{\epsilon} : S^{\otimes} \rightarrow A$, is a stationary policy over the product MDP such that
\begin{equation}\label{eq:pigo}
   \pi^{\epsilon}(p) = \arg \min_{a\in A} D^\epsilon_{min}(p,a),
\end{equation}
\text{where} $D^\epsilon_{min}(p,a) = \min \limits_{p':\Delta^{\otimes}(p,\,a,\,p')\geq 1-\epsilon} D^{\epsilon}(p')$, i.e., the minimum distance-to-$F_\mathcal{P}$ among the states reachable from $p$ under action $a$ with probability of at least $1-\epsilon$.
\end{definition}

We summarize the procedure for synthesizing the policy \( \pi^{\epsilon} \) in Alg.~\ref{alg:offline}. The inputs are the set of tasks and an MDP $\mathcal{M}^i$ that adheres to Assumption \ref{assumption} with parameter $\epsilon^i$. First, a DFA is generated for each TWTL formula (line 2). Then, for robot $r_i$ and its corresponding MDP, a product MDP is constructed (line 3). The product MDP is used to calculate the Distance-To-$F^{\otimes}$ for all states using $\epsilon^i$-stochastic transitions (line 4). Finally, the $\pi^{\epsilon}_{i, k}$ policy is computed for this specific product MDP by selecting the action that minimizes the Distance-To-$F^{\otimes}$ for each state (lines 5-6). 

\begin{algorithm}[htb!]
\small
\SetKwInOut{Input}{Input}
\SetKwInOut{Output}{Output}
\Input{\justifying{A set of tasks $\{K+1\}$; \,a MDP $\mathcal{M}^i$}}
\Output{Stationary policies \{$\pi_{i, k}^{\epsilon}\}$; product MDPs $\{\mathcal{P}_{i, k}\}$}
\caption{Off-line computation of $\pi^{\epsilon}$ policy}\label{alg:offline}
\DontPrintSemicolon
\For{$t_k = (\phi_k, P_k) \in \{K+1\}, \, k \leq K$}{
  $\mathcal{A}_k \leftarrow$ Create DFA of TWTL formula $\phi_k$; \\
      $\mathcal{P}_{i, k} = \mathcal{M}^i \times \mathcal{A}_k \leftarrow$ Create product MDP;\\
      Calculate Distance-To-$F^{\otimes}$ for all states in $\mathcal{P}_{i, k}$;\\
      \For{$p \in$  all states of $\mathcal{P}_{i, k}$}{
         $\pi_{i, k}^{\epsilon}(p) \leftarrow (\ref{eq:pigo})$
     }
  
}
\end{algorithm} 

Recall that we want to replace \( P^{\epsilon}_{i, k} \) with its lower bound in problem \eqref{eqn:optimization}. Now we will discuss methods for obtaining its lower bounds.  By definition, \( P^{\epsilon}_{i, k} \) is the probability that robot \( r_i \),  under stationary policy \( \pi^{\epsilon}_{i, k} \), satisfies the TWTL formula \( \phi_k \). Equivalently, it is the probability of reaching the accepting states of the product MDP from an initial state under \( \pi^{\epsilon}_{i, k} \). We adopt the method proposed in \cite{lin2024probabilistic}, which finds the lower bound of the probability of reaching accepting states within a time window, given the upper bounds of the transition uncertainties. We denote this lower bound as \( \underline{P}^{\epsilon}_{i,k} \) and refer to it as the \textbf{\textit{static lower bound}}. We define \( \{\underline{P}^{\epsilon}_{i,k} \}\) as the set of static lower bounds. Alternatively, another approach is to use the partial information from Assumption \ref{assumption} to estimate the upper and lower bounds of the transition probabilities in the MDP. These bounds can then be used with probabilistic model checkers such as PRISM \cite{KNP11} or the optimization-based method proposed in \cite{lin2023reinforcement} to compute the lower bounds.

While the static lower bound can be computed offline using partial information \( \epsilon^i \) without exact transition probabilities, it may be overly conservative when \( \epsilon^i \) significantly overestimates the true uncertainty \( \epsilon \). To reduce conservativeness, we introduce a second lower bound that is refined in real time as robots interact with the environment. Such a data-driven method allows the lower bounds of \( {P}^{\epsilon}_{i,k} \) to be adaptively refined for greater accuracy.
 
 For robot \( r_i \) with an initial state \( p_i \), the outcome of the policy \( \pi^{\epsilon}_{i, k} \) (either satisfying or violating the TWTL constraint) can be modeled as a Bernoulli trial with the probability of success equal to \( P^{\epsilon}_{i, k}(p_i) \). To estimate \( P^{\epsilon}_{i, k}(p_i) \), we use the Wilson score interval \cite{wilson1927probable}, which computes a confidence interval \([P^{\epsilon^-}_{i, k}(p_i), P^{\epsilon^+}_{i, k}(p_i)]\) that bounds \( P^{\epsilon}_{i, k}(p_i) \) with a specified confidence level, where
 \begin{equation} \footnotesize\label{lower} P^{\epsilon^-}_{i, k}(p_i) =  \frac{n^S_{i,k}(p_i)+\frac{1}{2}z^2}{n_{i,k}(p_i)+z^2} - \frac{z}{n_{i,k}(p_i)+z^2}\sqrt{\frac{n^S_{i,k}(p_i) n^F_{i,k}(p_i)}{n_{i,k}(p_i)}+\frac{z^2}{4}}.
\end{equation}
Here, \( n_{i,k}(p_i) \) represents the number of episodes in which robot \( r_i \) started at \( p_i \) and adopted \( \pi^\epsilon_{i, k} \). The number of episodes that satisfied or violated the constraint are denoted by \( n^S_{i,k}(p_i) \) and \( n^F_{i,k}(p_i) \), respectively. Thus, $n_{i,k}(p_i)=n^F_{i,k}(p_i)+ n^S_{i,k}(p_i)$. The value of $z$ is chosen based on the desired confidence level (e.g., 99\% confidence level corresponds to $z$ = 2.58). By selecting a sufficiently high value of $z$, we ensure that $P^{\epsilon}_{i, k}(p_i) \geq P^{\epsilon^-}_{i, k}(p_i)$ with high confidence. 

In the initial episodes, we rely on the static lower bound \( \underline{P}^{\epsilon}_{i,k} \), as the confidence lower bound \( P^{\epsilon^-}_{i,k} \) is initially far from the true probability \( P^{\epsilon}_{i,k} \) due to the limited data available. After each episode, we update \( P^{\epsilon^-}_{i,k}(p_i) \) for robot \( r_i \) and the task \( t_k \) it executed. As robots collect more data, \( P^{\epsilon^-}_{i,k} \) becomes a more accurate estimate of \( P^{\epsilon}_{i,k} \) and can gradually replaces the static lower bound. We denote this set of confidence lower bounds as \( \{P^{\epsilon^-}_{i,k}\} \) and refer to it as a set of \textbf{\textit{confidence lower bounds}}.

Finally, we discuss how to obtain \( V_{i, k} \) for problem \eqref{eqn:optimization}. As defined earlier, \( V_{i, k} \) represents the expected reward for robot \( r_i \) under its current policy, corresponding to the state value function of the MDP under \( \pi^{\epsilon}_{i, k} \). We estimate \( V_{i, k} \) using the TD(0) method \cite{sutton2018reinforcement}. 

\subsubsection{Single-Agent Policy for Reward Maximization ($k = K + 1$)}

In this case, the robot is not assigned to any TWTL tasks and exclusively focuses on exploring the environment to maximize its reward function. In this work, we use the tabular Q-learning algorithm \cite{watkins1992q} to learn an optimal policy that maximizes the expected discounted reward. However, any RL algorithm can be used for this purpose. The Q-function is updated as follows:
\begin{equation}\footnotesize
    Q_{i, k}(s^i_t, a^i_t) \leftarrow Q_{i, k}(s^i_t, a^i_t) + \alpha \left( r^i_{t+1} + \gamma \max_{a'} Q_{i, k}(s^i_{t+1}, a') - Q_{i, k}(s^i_t, a^i_t) \right),
\end{equation}
where \( s^i_t \), \( a^i_t \), \( r^i_{t+1} \), and \( \gamma \) denote the state, action, reward, and discount factor, respectively. At each state, the value function \( V_{i,K+1} \) is the maximum \( Q \)-value over all available actions at that state.


\begin{algorithm}[htb!]
\footnotesize
\SetKwInOut{Input}{Input}
\SetKwInOut{Output}{Output}
\Input{\justifying{A set of tasks $\{K+1\}$; a set of policies $\pi_{i, k}^{\epsilon}$;\, MDP $\mathcal{M}^i$ for robot $r_i$;  episode length $T$; initial MDP state $s_{0}$ }}
\caption{Single-agent Task Execution}\label{alg:single-agent}
\DontPrintSemicolon
\SetKwFunction{execute}{execute}
\SetKwFunction{update}{update}
\SetKwFunction{stats}{stats}
    \SetKwProg{Pn}{Function}{:}{\KwRet}
    \textbf{Initialization:} \\
    $\{\pi^{\epsilon}_{i, k}\}, \{\mathcal{P}_{i, k}\} \leftarrow$ Alg.~\ref{alg:offline}\\
    \For{ $\mathcal{P}_{i, k} \in$ $\{\mathcal{P}_{i, k}\}_{k=1, 2, \dots, K }$}{
         \For{$p \in$ all initial states of $\mathcal{P}_{i, k}$}{
         $n_{i,k}(p) \leftarrow 0, n^S_{i,k}(p) \leftarrow 0, n^F_{i,k}(p) \leftarrow 0$ \\ $\underline{P}^{\epsilon}_{i,k} \leftarrow \text{static lower bound}$
        }  
         $V_{i,k}(p) \leftarrow 0$ \textbf{for} $p \in$ all states of $\mathcal{P}_{i, k}$
        
     }
    $Q(s, a) \leftarrow 0$ \textbf{for} $(s,a) \in$ all state-action pairs of $\mathcal{M}^i$  \\
    $\pi(s) \leftarrow \argmax_{a} Q(s, a)$ \\
    \Pn{\execute{$\{P_{i,k}\}_{k=1, 2, \dots, K + 1}$}}{ 
    $k \leftarrow$ Sampled from $\{P_{i,k}\}_{k=1, 2, \dots, K + 1}$\\
    $p \leftarrow \bar{p}\in$ initial states of $\mathcal{P}_{i, k}$ s.t. mdp\_state($\bar{p}$) = $s_0$ \\
    $p_0 \leftarrow p$,  $s \leftarrow s_0$\\
    \For{$t = 1:T$}{
        \uIf{not constraint\_satisfied \textbf{and} $k \leq K$ }{
             $a \leftarrow$ Select action from policy $\pi^{\epsilon}_{i, k}(p)$\\
            Take action $a$, observe $p'$ (next state) and $r$ (reward)\\ $V_{i,k}(p) \leftarrow V_{i,k}(p) + \alpha \left( r + \gamma V_{i,k}(p') - V_{i,k}(p) \right)$ \\
            $p \leftarrow p', s \leftarrow mdp\_state(p')$
        }
        \ElseIf{constraint\_satisfied  \textbf{or} $k=K+1$}{
            $a \leftarrow$ Select action from $\pi(s)$ via $\epsilon-$greedy \\
            Take action $a$, observe $s'$ (next state) and $r$ (reward)\\
            $Q(s, a) \leftarrow Q(s, a) + \alpha \left( r + \gamma \max_{a'} Q(s', a') - Q(s, a) \right)$ \\
            Update policy $\pi(s)$ \\
            $s \leftarrow s'$
            
        }
        
    }
    $\FuncSty{update(}p_{0}, n_{i,k}(p_{0}),n^S_{i,k}(p_{0}),n^F_{i,k}(p_{0}),\textit{constraint\_satisfied}\FuncSty{)}$ \\
    $s_0 \leftarrow s$\\
    }       
    \Pn{\update{$p_{0}$, $n_{i,k}(p_{0})$, $n^S_{i,k}(p_{0})$, $n^F_{i,k}(p_{0})$, $satisfied$}}{ 
    $n_{i,k}(p_{0}) \leftarrow n_{i,k}(p_{0}) + 1$\\
    \lIf{satisfied}{
        $n^S_{i,k}(p_{0}) \leftarrow n^S_{i,k}(p_{0}) + 1$
    }
    \lElse{
        $n^F_{i,k}(p_{0}) \leftarrow n^F_{i,k}(p_{0}) + 1$
    }
    $P^{\epsilon^-}_{i, k}(p_{0}) \leftarrow$ equation (\ref{lower})
    }
    \Pn{\stats{}}{
     \For{$k = 1:K $}{
        $p \leftarrow \bar{p}\in$ initial states of $\mathcal{P}_{i, k}$ s.t. mdp\_state($\bar{p}$) = $s_{0}$ \\
        $V_{i, k} \leftarrow V_{i, k}(p)$, $\underline{P}^{\epsilon}_{i,k} \leftarrow \underline{P}^{\epsilon}_{i,k}(p)$ \\
        $\underline{P}^{\epsilon^-}_{i,k} \leftarrow P^{\epsilon^-}_{i,k}$ \textbf{if} $n_{i,k}(p) \geq N$ \textbf{else} $\underline{P}^{\epsilon^-}_{i,k} \leftarrow \underline{P}^{\epsilon}_{i,k}$ 
     }
     $V_{i, K+1} \leftarrow max_a Q(s_0, a)$, $\underline{P}^{\epsilon}_{i,K+1} \leftarrow 0$, $\underline{P}^{\epsilon^-}_{i,K+1} \leftarrow 0$ \\
     \textbf{return} $\{V_{i, k}\}_{i=1, \dots, K+1}$, $\{\underline{P}^{\epsilon}_{i,k}\}_{i=1, \dots, K+1}$, $\{\underline{P}^{\epsilon^-}_{i,k}\}_{i=1, \dots, K+1}$
    }  
\end{algorithm} 

\subsubsection{Combined Single-Agent Policy}
We summarize the overall single-agent policy in Alg.~\ref{alg:single-agent}. The algorithm begins by constructing the product MDPs and computing $\pi^{\epsilon}$ policies (line 2). Then it initializes the variables for the Wilson score lower bound (line 5), computes the static lower bound (line 6), initializes a value function table (line 7) and a Q-table (line 8), and its corresponding policy (line 9). Upon receiving task allocation $\{P_{i,k}\}$ from the high-level coordinator, a task is sampled from this probability distribution (line 11), and the robot's initial product MDP state is set accordingly (lines 12-13). Depending on the selected task, the robot either (i) select actions from policy $\pi^{\epsilon}_{i, k}$ and update the value function $V^{\epsilon}_{i, k}$ (lines 15-19) if executing a TWTL task ($k \leq K$) or (ii) learn a policy to maximize rewards via Q-learning (lines 20-25), if in an exploration phase ($k = K + 1$) or if the TWTL constraint is met before the episode ends. After executing the task, the robot updates the success or failure counts and the Wilson score lower bound (line 26). 

Function \stats{} is used to provide the expected rewards $\{V_{i,k}\}$, lower bounds $\{\underline{P}^{\epsilon}_{i,k}\}$ and $\{\underline{P}^{\epsilon^-}_{i,k}\}$ to the high-level coordinator (lines 33-39). The set \( \{\underline{P}^{\epsilon^-}_{i,k}\} \) initially consists of static lower bounds \( \underline{P}^{\epsilon}_{i,k} \), which are gradually replaced by the confidence lower bounds \( P^{\epsilon^-}_{i,k} \) as more data is collected (line 37). In line 37, $N$ is a user-defined data count threshold for switching from static to confidence lower bounds. We refer to \( \{\underline{P}^{\epsilon^-}_{i,k}\} \) as a set of \textbf{\textit{adaptive lower bounds}}.

\begin{remark}
   The single-agent policies provide high-level movement guidance for task execution (e.g., determining which grid cells to visit next, as shown in Figure~\ref{fig:example}). These policies do not explicitly handle low-level motion planning or collision avoidance. During actual execution, real-time collision avoidance algorithms such as \cite{van2008reciprocal, panagou2014motion, van2011reciprocal} and motion planning algorithms \cite{rosmann2012trajectory,  karaman2011sampling} can be integrated to ensure the safe and efficient navigation of robots.
\end{remark}

\subsection{Bi-level Task Allocation and Policy Learning}

\begin{assumption}\label{assumption2}  
We assume that, after substituting \( P^{\epsilon}_{i,k} \) with the static lower bound \( \underline{P}^{\epsilon}_{i,k} \) in the optimization problem \eqref{eqn:optimization}, a feasible solution \( \{P_{i,k}\} \) exists for the resulting optimization problem.
 \end{assumption}
 
This assumption sets a necessary condition for the proposed algorithm. The static lower bound \( \underline{P}^{\epsilon}_{i,k} \) reflects the maximum achievable probability of constraint satisfaction given the robot's limited knowledge of transition dynamics. If a feasible task allocation \( \{P_{i,k}\} \) does not exist, task satisfaction cannot be guaranteed at the desired probability thresholds, implying a need to either relax the TWTL task requirements or increase the number of robots.

We summarize the bi-level multi-robot coordination in Alg.~\ref{alg:bi-level}. Before each episode, the high-level coordinator collects task satisfaction probabilities and expected rewards from the robots (lines 2-3). The coordinator first attempts to solve the task assignment problem using the adaptive lower bounds (line 4). If no feasible solution is found, it falls back on the static lower bounds \( \underline{P}^{\epsilon}_{i,k} \), as Assumption 2 ensures that a feasible solution exists (lines 5-6). Once the task assignment is determined, each robot independently executes its policy (Alg.~\ref{alg:single-agent}) to complete the assigned task (lines 7-8).

\begin{algorithm}[htb!]
\footnotesize
\SetKwInOut{Input}{Input}
\Input{\justifying{A set of robots $\{N\}$; number of episodes $N_{episode}$}}
\caption{Bi-level Multi-robot Coordination}\label{alg:bi-level}
\DontPrintSemicolon
 \For{$j = 1:N_{episode}$}{
\For{$r_i \in \{N\}$}{
$\{V_{i, k}\}$, $\{\underline{P}^{\epsilon}_{i,k}\}$, $\{\underline{P}^{\epsilon^-}_{i,k}\} \leftarrow r_i.\stats{}$ (Alg.~\ref{alg:single-agent}) 
}
 
\textbf{try:} $\{P_{i,k}\} \leftarrow$ substitute $P^{\epsilon}_{i, k}$ with $\underline{P}^{\epsilon^-}_{i,k}$ in Problem~(\ref{eqn:optimization}) and solve 
    
 \If{no feasible solution $\{P_{i,k}\}$ found}{
            $\{P_{i,k}\} \leftarrow$ substitute $P^{\epsilon}_{i, k}$ with $\underline{P}^{\epsilon}_{i,k}$ in Problem~(\ref{eqn:optimization}) and solve 
        }
\For{$r_i \in \{N\}$}{
$ r_i.\execute{$\{P_{i,k}\}_{k=1, 2, ..., K + 1}$}$ (Alg.~\ref{alg:single-agent}) 
}

 }
\end{algorithm} 

\begin{remark}
In cases where some robots fail, re-planning can be achieved by solving a modified version of the original problem (Equation 3). The reallocation process focuses on the affected TWTL tasks and redistributes them among available robots that do not commit to any TWTL tasks. This minimizes disruptions to ongoing tasks and improves robustness against failure.

\end{remark}
\begin{theorem}\label{theorem}
Given a set of robots $\{N\}$, a set of TWTL tasks $\{K+1\}$, and a set of MDPs $\{\mathcal{M}\}$ subject to Assumption 1. Suppose Assumption 2 holds. Then, in every episode where Alg.~\ref{alg:bi-level} finds a feasible solution $\{P_{i,k}\}$, the probability of satisfying $\phi_k$ for each task $t_k$ is guaranteed to be at least $P_k$ with high confidence. \end{theorem}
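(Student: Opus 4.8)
The plan is to combine Proposition~\ref{proposition} with the independence structure of the sampled assignments and the decoupled executions, in three steps: first show that any solution Alg.~\ref{alg:bi-level} returns is feasible for \eqref{eqn:optimization} with $P^{\epsilon}_{i,k}$ replaced by the lower bound it actually used; then show that this lower bound is (with high confidence) a true lower bound of $P^{\epsilon}_{i,k}$; and finally show that the resulting inequality $1-\prod_i(1-P_{i,k}P^{\epsilon}_{i,k})\ge P_k$ is itself a lower bound on $\Pr(\{\mathbf{o}_{jT,jT+T}\}\models\phi_k)$. For the first step I would trace lines~4--6 of Alg.~\ref{alg:bi-level}: the algorithm first solves \eqref{eqn:optimization} with $P^{\epsilon}_{i,k}$ replaced by the adaptive bound $\underline{P}^{\epsilon^-}_{i,k}$ and, failing that, by the static bound $\underline{P}^{\epsilon}_{i,k}$, which by Assumption~\ref{assumption2} always admits a feasible solution; in either case the returned $\{P_{i,k}\}$ satisfies $1-\prod_{i=1}^N(1-P_{i,k}\lfloor P^{\epsilon}_{i,k}\rfloor)\ge P_k$ for all $k\le K$, with $\lfloor P^{\epsilon}_{i,k}\rfloor\in\{\underline{P}^{\epsilon^-}_{i,k},\underline{P}^{\epsilon}_{i,k}\}$.

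Next I would verify $\lfloor P^{\epsilon}_{i,k}\rfloor\le P^{\epsilon}_{i,k}$. When $\lfloor P^{\epsilon}_{i,k}\rfloor=\underline{P}^{\epsilon}_{i,k}$ this holds deterministically, since under Assumption~\ref{assumption} the construction of \cite{lin2024probabilistic} produces a provable lower bound on the probability of reaching $F^{\otimes}$ under $\pi^{\epsilon}_{i,k}$. When instead $\lfloor P^{\epsilon}_{i,k}\rfloor=P^{\epsilon^-}_{i,k}$ (used on line~37 of Alg.~\ref{alg:single-agent} once $n_{i,k}\ge N$), the outcome of $\pi^{\epsilon}_{i,k}$ from the robot's current state is a Bernoulli trial with success probability $P^{\epsilon}_{i,k}$, and \eqref{lower} is the lower endpoint of its Wilson score interval, so $P^{\epsilon^-}_{i,k}\le P^{\epsilon}_{i,k}$ holds at the confidence level fixed by $z$; a union bound over the at most $N$ robots appearing in the constraint for $t_k$ yields a simultaneous high-confidence bound. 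Given $\lfloor P^{\epsilon}_{i,k}\rfloor\le P^{\epsilon}_{i,k}$, Proposition~\ref{proposition} upgrades the feasibility inequality to $1-\prod_{i=1}^N(1-P_{i,k}P^{\epsilon}_{i,k})\ge P_k$ with the same confidence.

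It then remains to identify the right-hand event. Each robot $r_i$ independently samples its task from $\{P_{i,k}\}_k$ (line~11 of Alg.~\ref{alg:single-agent}) and, conditioned on sampling $t_k$, runs $\pi^{\epsilon}_{i,k}$ from its current state and hence satisfies $\phi_k$ with probability $P^{\epsilon}_{i,k}$, which is exactly the quantity supplied to the coordinator and evaluated at that state; because the sampling randomness is independent of the MDP transitions and the robots are decoupled by the system-setup assumption, the events $E_i=\{r_i\text{ is assigned }t_k\}\cap\{\boldsymbol{o}^i_{jT,jT+T}\models\phi_k\}$ are mutually independent with $\Pr(E_i)=P_{i,k}P^{\epsilon}_{i,k}$. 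Thus $\Pr\big(\bigcup_{i}E_i\big)=1-\prod_{i=1}^N(1-P_{i,k}P^{\epsilon}_{i,k})\ge P_k$, and since $\bigcup_i E_i\subseteq\{\{\mathbf{o}_{jT,jT+T}\}\models\phi_k\}$ by the definition of team-output-word satisfaction, we conclude $\Pr(\{\mathbf{o}_{jT,jT+T}\}\models\phi_k)\ge P_k$, with high confidence inherited from the previous step. I expect the main obstacle to be step three: making the probability space explicit (per-episode sampling, per-robot independence, and the fact that a robot's episode output word still satisfies $\phi_k$ after it switches to exploration, which relies on the accepting states of the TWTL DFA being absorbing) and tracking precisely how the Wilson-interval confidence propagates, i.e.\ whether the final guarantee is per-task or simultaneous across all $K$ tasks.
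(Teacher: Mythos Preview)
Your proof is correct and takes essentially the same route as the paper: case-split on whether Alg.~\ref{alg:bi-level} used the static or the adaptive lower bound, verify in each case that the bound is (deterministically or at Wilson confidence) below $P^{\epsilon}_{i,k}$, and invoke Proposition~\ref{proposition} to recover constraint~\eqref{eqn:constraint1}. Your Step~3---the explicit independence argument showing $1-\prod_i(1-P_{i,k}P^{\epsilon}_{i,k})$ actually lower-bounds $\Pr(\{\mathbf{o}_{jT,jT+T}\}\models\phi_k)$---and your closing caveats about absorbing accepting states and confidence-level bookkeeping supply rigor that the paper's own proof leaves implicit (it simply identifies ``constraint~\eqref{eqn:constraint1} holds'' with the theorem's conclusion).
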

\begin{proof}
    Theorem \ref{theorem} follows directly from Proposition \ref{proposition}. Full proof is available in the appendix.
\end{proof}


\section{Simulation}
To validate the proposed algorithm, we conduct simulations based on the pickup and delivery example illustrated in Fig.~\ref{fig:example}, incorporating time window requirements. The environment depicted in Fig.~\ref{fig:env} is used across all experiments. In Fig.~\ref{fig:env}, the colored cells represent stations (\(\text{S}_1\), \(\text{S}_2\)) where robots can idle, warehouses (\(\text{W}_1\), \(\text{W}_2\)) for storing resources, resource processing units (\(\text{P}_1\), \(\text{P}_2\)), and an operation site (O) where processed resources are delivered. The black zones indicate restricted areas that robots must avoid, while the blue zone represents water, which ground robots are prohibited from entering. The gray zone represents an area of interest requiring aerial monitoring, which is unknown to the robots. The arrows show two one-way bridges over the water. 
 

 
\begin{figure}[htbp]
    \centering
    \begin{subfigure}[t]{0.33\linewidth}
        \includegraphics[width=\linewidth]{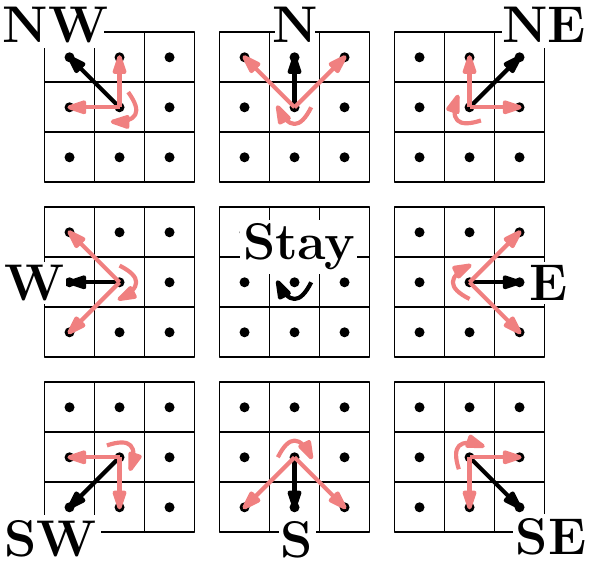}
        \caption{}
        \label{fig:transitions}
    \end{subfigure}
    \hspace{0.4cm}
    \begin{subfigure}[t]{0.6\linewidth}
    \includegraphics[width=\linewidth]{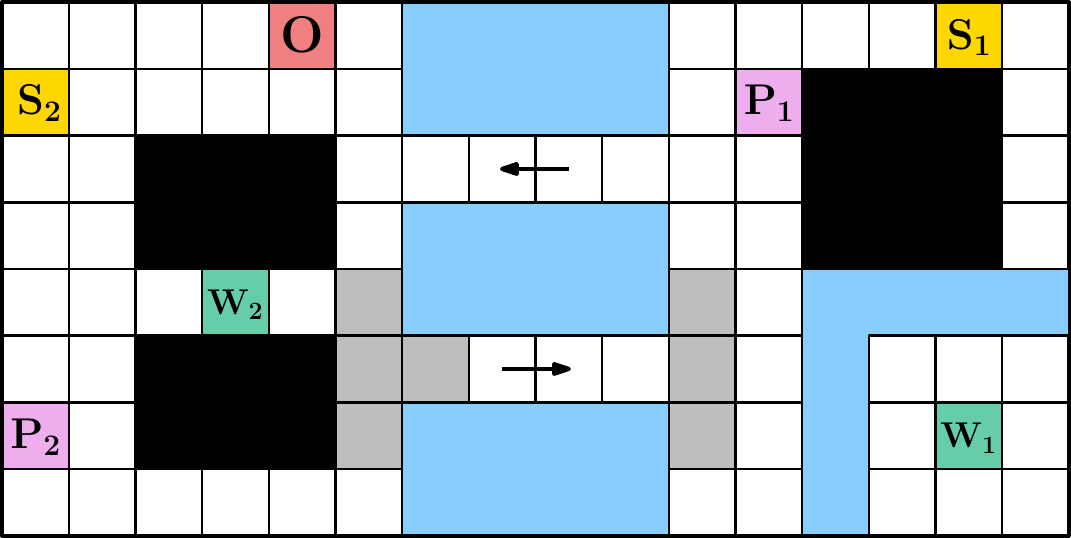}
        \caption{}
        \label{fig:env}
    \end{subfigure}
    \caption{(a) Transitions (intended - black, unintended - red) under each action. (b) An environment with different types of locations denoted by different colors.}
    
\label{fig:sim}
\end{figure} 
 
\begin{table}[htbp]
    \centering
    \footnotesize
    \caption{Robot Types, Rewards, and Uncertainties}
    \begin{tabular}{|c|c|c|c|c|}
        \hline
        \textbf{Robot Type} & \textbf{Robot Index} & \textbf{Reward} & \boldmath$\epsilon$ & \boldmath$\epsilon^i$ \\
        \hline
        \multirow{2}{*}{Drone} & 1, 2 & 5 at grey cells, 0 at other cells & \multirow{2}{*}{0.1} & \multirow{2}{*}{0.2} \\ \cline{2-3}
            & 3, 4 & 1 at grey cells, 0 at other cells &  &  \\
        \hline \hline
        \multirow{2}{*}{Mobile Robot} & 5, 6 & \multirow{2}{*}{1 at $S_2$, 0 at other cells} & \multirow{2}{*}{0.15} & 0.3 \\ \cline{2-2}\cline{5-5}
         & 7, 8 &  &  & 0.25 \\
        \hline
    \end{tabular}
\label{table2}
\end{table}

We consider robots with transition models described in Example \ref{example:transition}. The robot specifications are summarized in Table~\ref{table2}. Drones are incentivized to identify and monitor traffic congestions (grey cells with initially unknown locations to drones),  where drones 1 and 2 get higher rewards due to their better sensing capability. Mobile robots are encouraged to return to Station $S_2$ as part of their auxiliary objectives. All robots have a conservative estimate $\epsilon^i$ of their actual unknown transition uncertainty $\epsilon$. 

Table~\ref{table1} presents the task specifications. Tasks $\phi_1$ and $\phi_2$ require robots to pick up resources from warehouse $W_2$, process them at $P_2$ or $P_1$, respectively, and deliver them to $O$—all within 15 time steps, ensuring a 90\% success rate. Tasks $\phi_3$ and $\phi_4$ involve transporting resources from a different warehouse $W_1$, delivering them to $P_1$, and to $O$, with different time window requirements. The lower desired probabilities for $\phi_3$ and $\phi_4$ provide robots with more flexibility to pursue their auxiliary task of environmental monitoring while still ensuring that at least one of these tasks is successfully completed with high probability (approximately 90\%).

\begin{table}[h!]
\centering
\small
\caption{Task Specifications as TWTL Formulas}
\begin{tabular}{|c|c|c|}
\hline
\textbf{Task} & \textbf{TWTL Formula} & \makecell{\textbf{Desired} \\ \textbf{Probability}}\\ 
\hline
$\phi_1$ & $[H^1W_2]^{[0,15]} \cdot [H^1P_2]^{[0,15]} \cdot [H^1O]^{[0,15]}$ & 0.9\\
\hline
$\phi_2$ & $[H^1W_2]^{[0,15]} \cdot [H^1P_1]^{[0,15]} \cdot [H^1O]^{[0,15]}$ & 0.9\\
\hline
$\phi_3, \phi_4$ & $[H^1W_1]^{[0,20]} \cdot [H^1P_1]^{[0,10]} \cdot [H^1O]^{[0,15]}$ & 0.7\\
\hline
\end{tabular}
\label{table1}
\end{table}

We utilize the Robot Operating System (ROS 2) \cite{doi:10.1126/scirobotics.abm6074}, where each robot is implemented as an independent ROS node. Robots communicate with the coordinator using ROS communication mechanisms, enabling realistic simulation of a distributed multi-robot system. Simulations are conducted over $N_{episode}$ $=2000$ episodes per iteration. The results presented are averaged over 20 iterations. Details of the parameters used are provided in Appendix~\ref{C}.

\textbf{Case 1.} We evaluate the proposed algorithm under two conditions: using static lower bounds (by skipping line 4 in Alg.~\ref{alg:bi-level}) and adaptive lower bounds (as described in Alg.~\ref{alg:bi-level}). 
Our results show that solving the task allocation problem with static lower bounds tends to assign more robots to TWTL tasks ($\phi_1$ - $\phi_4$), compared to the adaptive lower bounds. This finding aligns with our expectations, as the static lower bound computes task satisfaction probability based on $\epsilon^i$, which overestimates the actual uncertainty $\epsilon$. In contrast, the adaptive lower bounds refine the estimate of task satisfaction probability as the robot interacts with the environment, resulting in a less conservative lower bound. With the adaptive lower bounds, robots progressively gain confidence in TWTL satisfaction, allowing the high-level coordinator to adaptively improve the task allocation by assigning fewer robots to TWTL tasks. 

Fig.~\ref{fig:sat_rate} shows that task satisfaction rates consistently exceed the desired probability over 2000 episodes. This confirms that the proposed framework successfully ensures probabilistic satisfaction of the TWTL tasks, regardless of the lower bound selection. Nevertheless, static lower bounds result in a conservatively higher satisfaction rate. Fig.~\ref{fig:sat_rate_ada} compares the TWTL satisfaction rate in the first 100 episodes versus 2000 episodes when using adaptive lower bounds. We observe that as the adaptive lower bounds refine over time, satisfaction rates converge toward the desired probabilities by reducing assignments to TWTL tasks. As a result, using the adaptive lower bounds yields higher rewards, as shown in Fig.~\ref{fig:reward}. 

\begin{figure}[htbp]
    \begin{subfigure}[t]{0.49\linewidth}
        \includegraphics[width=\linewidth]{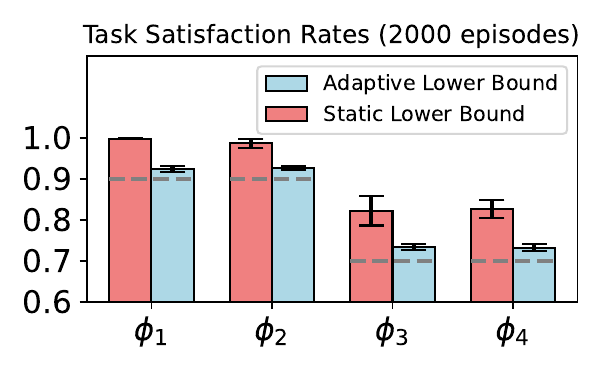}
        \caption{Static vs. Adaptive}
        \label{fig:sat_rate}
    \end{subfigure}
    \begin{subfigure}[t]{0.49\linewidth}
        \includegraphics[width=\linewidth]{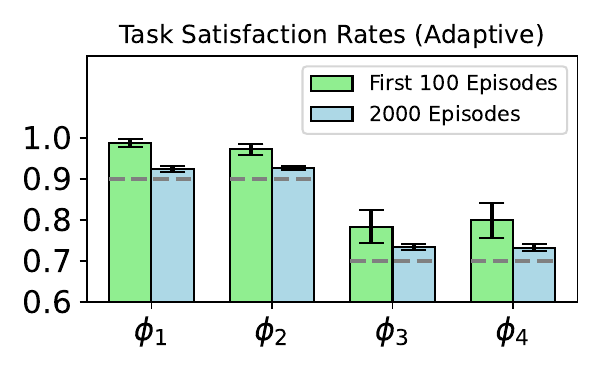}
        \caption{Initial vs. Full Episodes}
        \label{fig:sat_rate_ada}
    \end{subfigure}
    \caption{TWTL Task satisfaction rate. (a): Satisfaction rate over 2000 episodes using static and adaptive lower bounds. (b) Satisfaction rate using adaptive lower bounds, over the first 100 episodes vs 2000 episodes. The dashed line represents the desired probability of each TWTL task. The bars show the mean values over the 20 iterations, with error bars depicting the standard deviation.
    }
\label{fig:sim_result}
\end{figure}

Our framework, which incorporates auxiliary reward functions alongside the TWTL constraint, enables adaptive and self-improving task allocation. In the early stages of each iteration, the system predominantly assigns TWTL tasks to the drones (Robots 1-4) as the mobile robots (Robots 5-8) initially have high transition uncertainty estimates ($\epsilon^i$), resulting in lower task satisfaction probability estimates.
As the simulation progresses, the robots continuously update their task satisfaction probabilities. Meanwhile, drones gradually discover high-reward areas, prompting the coordinator to adjust the task assignments. By the end of the simulation, the system converges to a reasonable division of labor, with mobile robots (Robots 5-8) and low-reward drones (Robots 3-4) assigned to TWTL tasks, while drones with high-quality sensors (Robots 1-2) remain unassigned to focus on monitoring the high-interest area. This adaptive behavior demonstrates the system’s ability to dynamically reallocate resources, balancing reward maximization with guaranteed satisfaction of TWTL constraints.

\textbf{Case 2.} In this case study, we evaluate the computation time of the proposed framework as the number of robots increases. The experimental setup is identical to Case 1, where tasks listed in Table~\ref{table1} must be satisfied with their corresponding probability thresholds. We simulate systems with 8, 16, 24, 32, 40, and 48 robots, duplicating the robots defined in Table~\ref{table2} by factors from 1 to 6. In this setup, we utilize the adaptive lower bound.

Fig.~\ref{fig:computation_time_vs_robots} presents the average computation time per episode for solving the task allocation problem \eqref{eqn:optimization}, averaged over 2000 episodes. The results show that, even with a large number of robots, the proposed framework solves the task assignment problem in a relatively short time. Notably, for 48 robots, the average computation time remains under 0.4 seconds, demonstrating the framework’s capability for efficient real-time applications.

\begin{figure}[htbp]
    \centering
    \begin{subfigure}[t]{0.49\linewidth}
        \includegraphics[width=\linewidth]{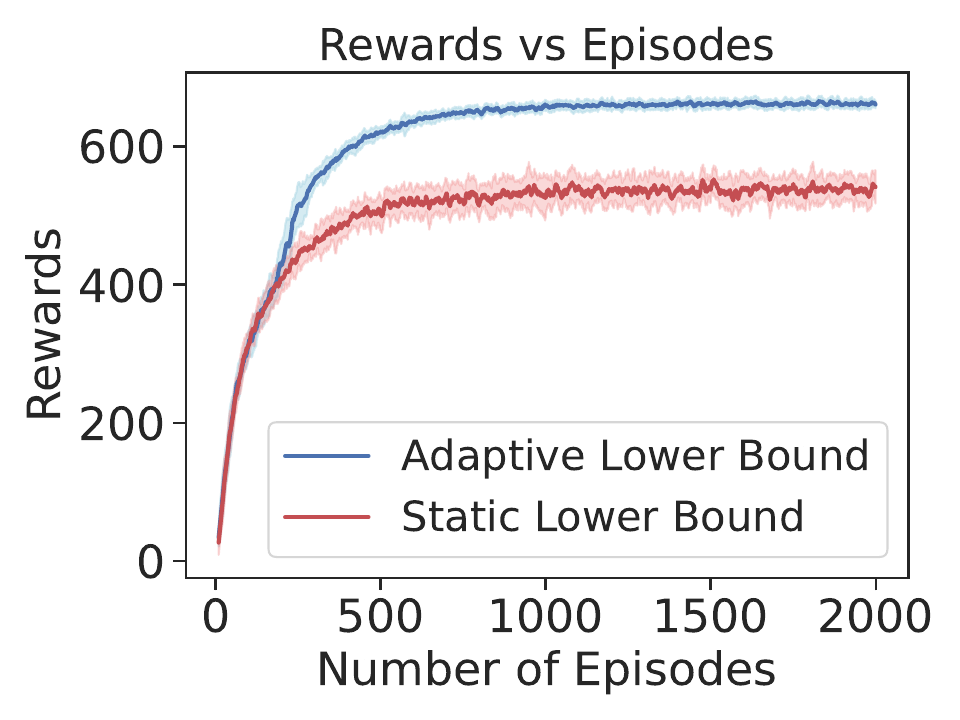}
        \caption{Static vs. Adaptive}
        \label{fig:reward}
    \end{subfigure}
    \begin{subfigure}[t]{0.49\linewidth}
        \includegraphics[width=\linewidth]{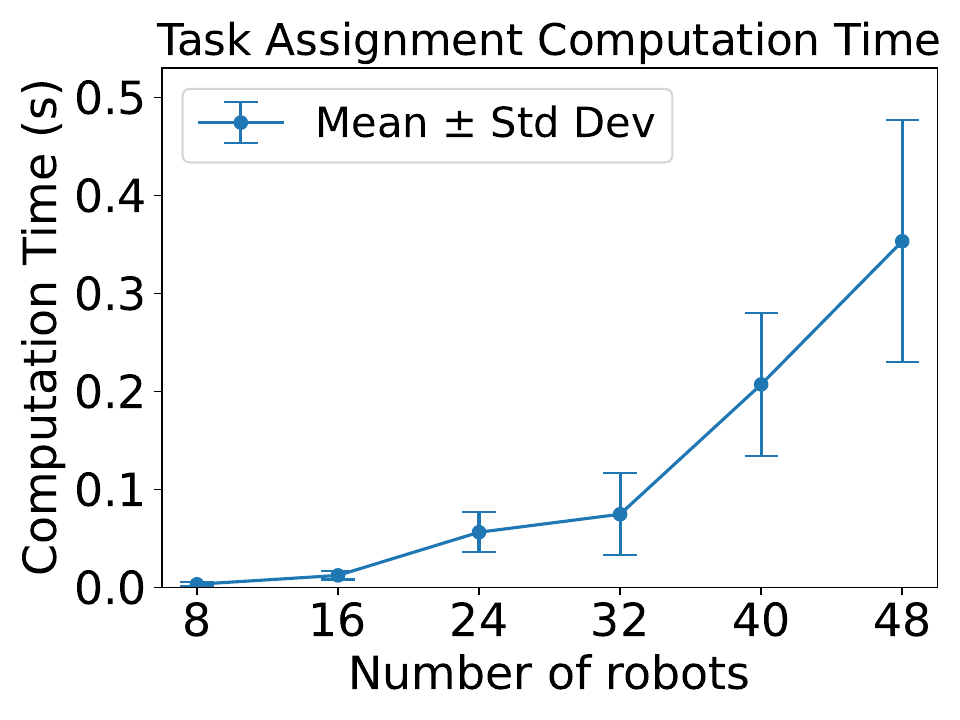}
        \caption{}
        \label{fig:computation_time_vs_robots}
    \end{subfigure}
    
    \caption{(a): Total rewards accumulated by all robots over the episodes. The reward represents the sum of individual rewards collected by each robot. (b): Computation time for solving task assignment (Alg.~\ref{alg:bi-level}, line 4-6) with different number of robots.}
\end{figure}




In Alg.~\ref{alg:offline}, each robot independently constructs the product MDPs and synthesizes the policies offline. In Alg.~\ref{alg:single-agent}, each robot executes its task independently, without the need for communication with others. Thus, computation time for these processes is unaffected by the number of robots in the system. While our evaluation focused on varying the number of robots, increasing the number of tasks similarly impacts computation time by adding nonlinear constraints to the optimization problem. Analysis of this effect can be found in Appendix~\ref{B2}. 


\section{Conclusion}
This paper presents a bi-level framework that integrates high-level task assignment with lower-level policy execution and learning to ensure robots satisfy Time-Window Temporal Logic (TWTL) tasks with guaranteed probability thresholds. We introduce adaptive lower bounds on task completion probabilities, enabling robots to iteratively refine their probability estimates for more informed task allocation decisions. By incorporating reward functions as an auxiliary objective, the system can iteratively improve task assignments to maximize expected rewards while maintaining probabilistic task satisfaction. The flexibility to impose different reward functions on robots enables the system to achieve multiple objectives simultaneously, allowing different reward functions to guide user-customized allocation plans without compromising the primary objective of satisfying TWTL tasks.

We provide theoretical analysis and conduct comprehensive simulations to validate the framework. The results highlight its ability to ensure constraint satisfaction at desired probability thresholds under uncertainty. Furthermore, simulation results also show that the task allocation problem can be efficiently solved for large numbers of robots in real time, demonstrating the framework's practical applicability in real-world scenarios.


\begin{acks}
We sincerely thank Dr. Kevin Leahy for his insightful discussions and valuable suggestions on this paper's problem formulation.
\end{acks}



\bibliographystyle{ACM-Reference-Format} 
\bibliography{arxiv}

\appendix
\section{Proof of Theorem \ref{theorem}}\label{A}
\begin{proof}
In Alg.~\ref{alg:bi-level}, a feasible task allocation $\{P_{i,k}\}$ for each episode is solved using either (i) the static lower bounds (lines 5-6); or (ii) the adaptive lower bounds (line 4). 

(i) By Proposition~\ref{proposition}, replacing $P^{\epsilon}_{i,k}$ with its static lower bounds $\underline{P}^{\epsilon}_{i,k}$ in the optimization problem ensures that the solution remains feasible for the original problem \eqref{eqn:optimization}. 

(ii) The adaptive lower bounds, defined in Alg.~\ref{alg:single-agent} (line 37), include either static lower bounds $\underline{P}^{\epsilon}_{i,k}$ or confidence lower bound $P^{\epsilon^-}_{i,k}$ which are lower bounds of $P^{\epsilon}_{i,k}$ (with high confidence though not 100\%). By Proposition~4.1, replacing \( P^{\epsilon}_{i,k} \) with these adaptive lower bounds ensures that the solution found in line 4 of Alg.~\ref{alg:bi-level} solves the original problem \eqref{eqn:optimization} with high confidence.

Considering both cases, we conclude that the feasible solution 
$\{P_{i,k}\}$ found by Alg.~\ref{alg:bi-level} satisfies the original problem \eqref{eqn:optimization} with high confidence. 
Specifically, this ensures that constraint \eqref{eqn:constraint1} holds, meaning that the probability of satisfying each TWTL task 
$t_k$ meets or exceeds its required threshold $P_k$.
\end{proof}

\textit{Notes: Theorem~4.6 does not guarantee that the probabilities of satisfying TWTL tasks are always greater than or equal to their desired thresholds; instead, it ensures this probabilistic satisfaction with high confidence. This is because $P^{\epsilon}_{i, k}$ was estimated using the Wilson score method, which means that \(P^{\epsilon}_{i,k} \geq P^{\epsilon^-}_{i, k}\) holds true with a high confidence level depending on the chosen parameter $z$ in (9). Additionally, though Assumption~\ref{assumption2} ensures the existence of a feasible solution \(\{P_{i,k}\}\) in line 6 of Alg.~\ref{alg:bi-level},  finding such a feasible solution is not always guaranteed due to the non-convexity of \eqref{eqn:constraint1}. Therefore, we aim to provide a probabilistic satisfaction guarantee whenever a feasible solution is found.}

\section{Extended Simulation Results}\label{B1}

\subsection{Extended Results for Case 1}
We extend our discussion of Case 1 with a more detailed analysis. We evaluate the task allocation strategy by analyzing the percentage of episodes in which each robot is assigned to TWTL tasks versus left unassigned to focus on reward maximization, as shown in Table \ref{table:robot task assignment}. The results show that the system consistently assigns TWTL tasks to Robots 3-8, while Robots 1 and 2 primarily focus on maximizing their auxiliary rewards.

This division of labor emerges naturally from our framework's design, which prioritizes satisfying TWTL tasks with the desired probability threshold (primary objective) while simultaneously optimizing the robots' auxiliary rewards (secondary objective). Specifically, robots with higher reward potential, such as Drones 1 and 2, are more likely to remain unassigned to TWTL tasks, allowing them to maximize their rewards by monitoring interested areas. Conversely, Drones 3 and 4, which exhibit lower reward potential, are more frequently assigned to TWTL tasks to ensure task satisfaction.

Furthermore, robots with higher task completion uncertainty due to high initial transition uncertainty estimates, such as Robots 5 and 6, are more often assigned to easier tasks (e.g., $\phi_1$, which requires traveling a shorter distance). In contrast, the more reliable Robots 7 and 8 are consistently assigned to relatively more difficult TWTL tasks (e.g., $\phi_2$, which require longer travel distances) to maintain the required task satisfaction thresholds.

Figure~\ref{fig:number_of_robots} presents the average number of robots assigned to each task over 20 iterations. As shown, utilizing adaptive lower bounds significantly reduces the number of robots allocated to TWTL tasks while increasing the number of unassigned robots, which explains the higher overall rewards shown in Fig.~\ref{fig:reward}. Additionally, this approach results in a satisfaction rate that is less conservative and more closely aligned with the desired probability thresholds, as shown in Fig.\ref{fig:sim_result}.

\begin{table}[hbp]
    \centering 
    \small
    \caption{Task Assignment Distribution with Adaptive Lower Bound}
    \begin{tabular}{|c||c|c|c|c|}
        \hline
        \multirow{2}{*}{\textbf{Robot}} & \multirow{2}{*}{\textbf{Reward (\%)}} & \multicolumn{3}{c|}{\textbf{TWTL Tasks (\%)}} \\
        \cline{3-5}
        &  & \textbf{$\phi_1$} & \textbf{$\phi_2$} & \textbf{$\phi_3$ or $\phi_4$} \\
        \hline
        \textbf{Robot 1}   & 94.73 ± 2.29  & 0.23 ± 0.12  & 1.26 ± 0.69  & 3.78 ± 1.66 \\
        \hline
        \textbf{Robot 2}   & 96.34 ± 1.36  & 0.15 ± 0.08  & 1.03 ± 0.50  & 2.48 ± 0.90 \\
        \hline
        \textbf{Robot 3}   & 24.45 ± 1.59  & 0.32 ± 0.13  & 3.43 ± 0.93  & 71.80 ± 1.76 \\
        \hline
        \textbf{Robot 4}   & 25.56 ± 1.29  & 0.32 ± 0.13  & 4.25 ± 0.80  & 69.86 ± 1.17 \\
        \hline
        \textbf{Robot 5}   & 47.12 ± 33.50 & 52.48 ± 33.45 & 0.40 ± 0.25   & 0.00 ± 0.00 \\
        \hline
        \textbf{Robot 6}   & 59.16 ± 32.96 & 40.51 ± 32.89 & 0.33 ± 0.19   & 0.00 ± 0.00 \\
        \hline
        \textbf{Robot 7}   & 58.57 ± 34.00 & 2.41 ± 0.88  & 39.02 ± 34.41 & 0.00 ± 0.00 \\
        \hline
        \textbf{Robot 8}   & 44.21 ± 33.92 & 3.50 ± 1.26  & 52.30 ± 33.79 & 0.00 ± 0.00 \\
        \hline
    \end{tabular}
    \label{table:robot task assignment}
\end{table}

\begin{figure}[hbp] 
    \centering
    \begin{subfigure}[t]{0.49\linewidth}
        \includegraphics[width=\linewidth]{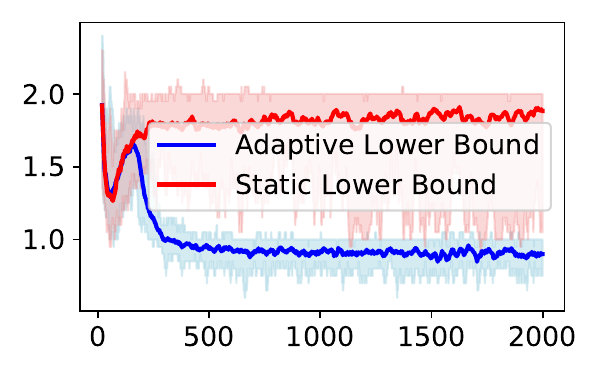}
        \caption{Task 1: $\phi_1$}
        \label{fig:task_1}
    \end{subfigure}
    \begin{subfigure}[t]{0.49\linewidth}
        \includegraphics[width=\linewidth]{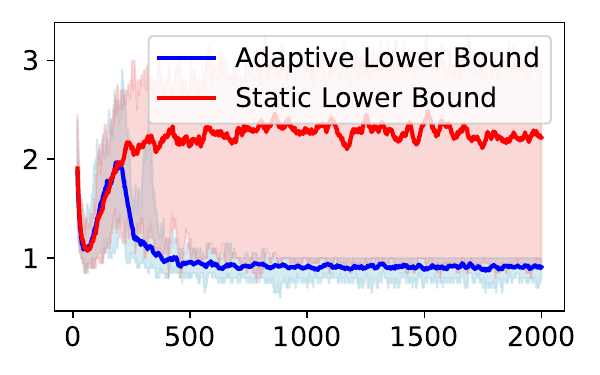}
        \caption{Task 2: $\phi_2$}
        \label{fig:task_2}
    \end{subfigure}
    \begin{subfigure}[t]{0.49\linewidth}
        \includegraphics[width=\linewidth]{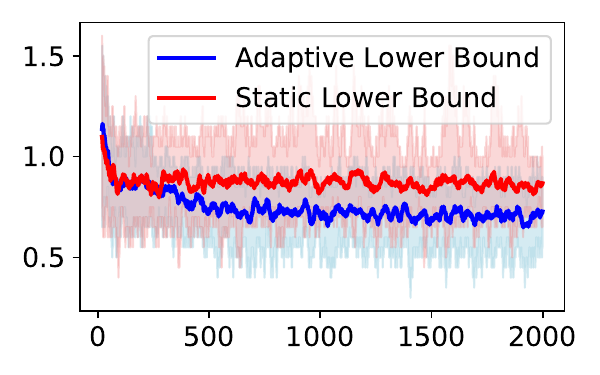}
        \caption{Task 3: $\phi_3$}
        \label{fig:task_3}
    \end{subfigure}
     \begin{subfigure}[t]{0.49\linewidth}
        \includegraphics[width=\linewidth]{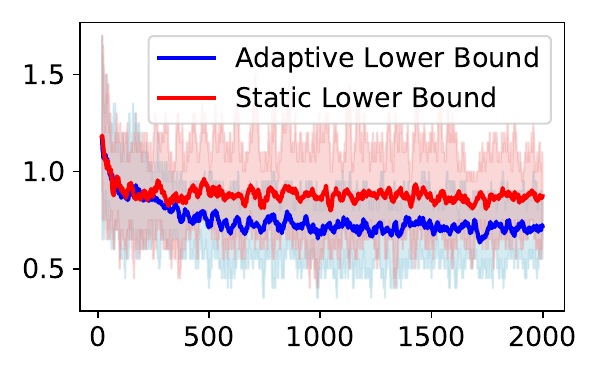}
        \caption{Task 4: $\phi_4$}
        \label{fig:task_4}
    \end{subfigure}
    \begin{subfigure}[t]{0.49\linewidth}
        \includegraphics[width=\linewidth]{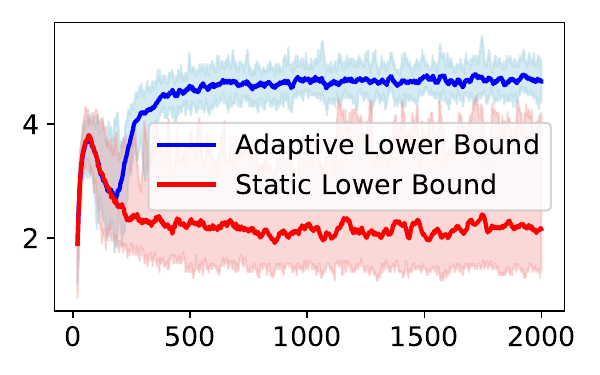}
        \caption{Reward Maximization}
        \label{fig:task_5}
    \end{subfigure}
    \caption{(a)-(e): Number of robots assigned to each task over episodes}
    \label{fig:number_of_robots}
    \end{figure}

\subsection{Extended Results for Case 2}\label{B2}

We report the computation time of the proposed framework when varying the number of tasks. We fix the number of robots to 20 by duplicating the 4 types of robots defined in Table~\ref{table2} by 5 times. We simulate the systems with 2, 4, 6, 8, and 10 tasks, selected from the tasks defined in Table~\ref{table1}. In this setup, we use the adaptive lower bound. Fig.~\ref{fig:computation_time_vs_tasks} presents the average computation time per episode for solving the task allocation problem \eqref{eqn:optimization}, averaged over 2000 episodes. Since increasing the number of tasks introduces more nonlinear constraints in \eqref{eqn:constraint1}, the observed superlinear growth in computation time aligns with our expectations. However, even for a moderate problem size—20 robots with 10 tasks—the framework remains computationally efficient, requiring an average of only 0.3 seconds per episode.

In Alg.~\ref{alg:offline}, each robot independently constructs the product MDPs and synthesizes policies offline. As the number of tasks increases, the number of product MDPs and policies that each robot must generate also grows. Fig.~\ref{fig:computation_time_init} reports the computation time for Alg.~\ref{alg:offline} under different numbers of tasks. In Alg.~\ref{alg:single-agent}, since each robot executes only one policy per episode, task execution remains unaffected by the number of tasks.

\section{Simulation Setup}\label{C}
We conduct simulations over a fixed number of episodes ($N_{episode}$ $=2000$) using diminishing $\epsilon$-greedy policy within the Q-learning algorithm (with $\epsilon_{init}=0.7$ and $\epsilon_{final}=0.0001$). The learning rate and discount factor are set to 0.1 and 0.95, respectively. We set the $z$ score to 2.58 to ensure the probabilistic constraint satisfaction with high confidence. The data count threshold for Alg. 3 line 37 is set to 40. We use the Scipy SLSQP solver to solve problem \eqref{eqn:optimization}.

\begin{figure}[htbp]
    \begin{subfigure}[t]{0.49\linewidth}
        \includegraphics[width=\linewidth]{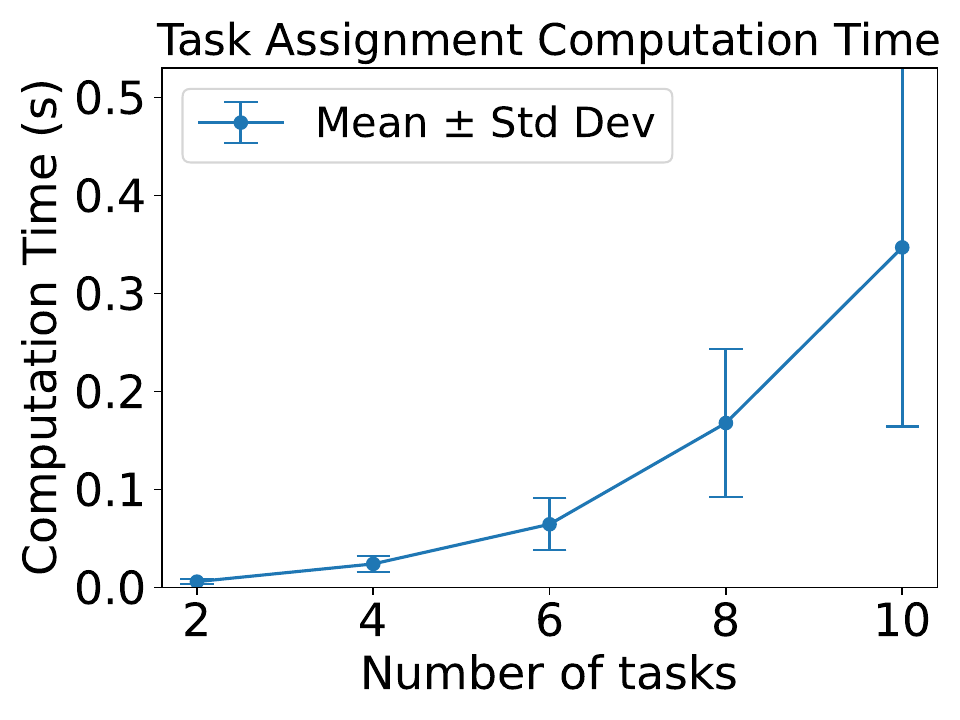}
        \caption{}
        \label{fig:computation_time_vs_tasks}
    \end{subfigure}
    \begin{subfigure}[t]{0.49\linewidth}
        \includegraphics[width=\linewidth]{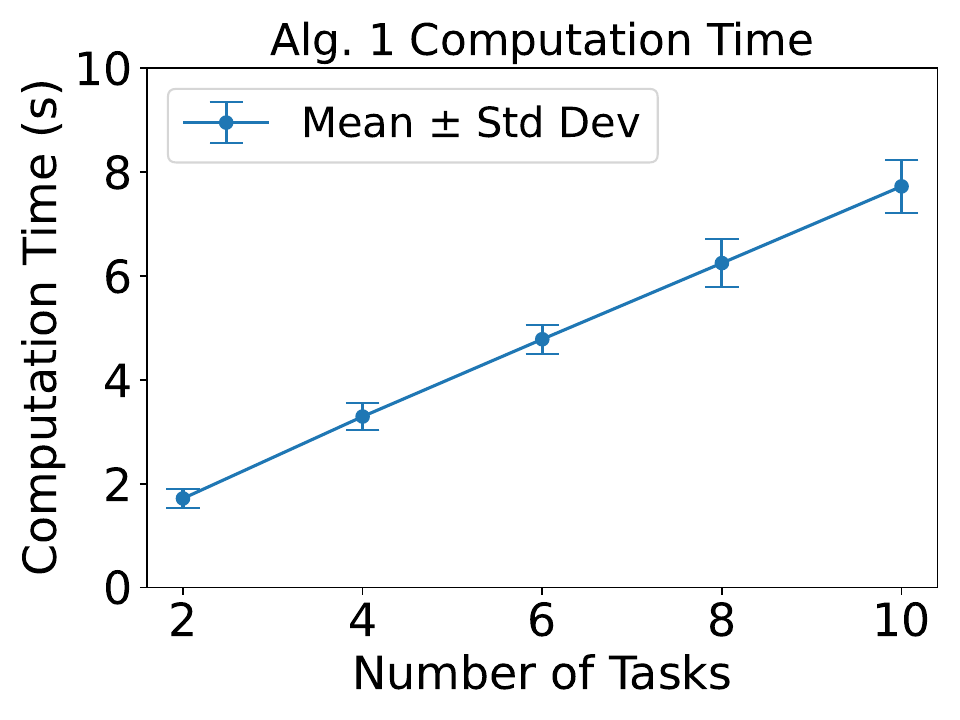}
        \caption{}
        \label{fig:computation_time_init}
    \end{subfigure}
    \caption{(a): Computation time for solving task assignment (Alg.~\ref{alg:bi-level}, line 4-6) with different number of tasks. (b) Computation time for Alg. 1 under different number of tasks. 
    }
\label{fig:time_vs_task}
\end{figure}


\end{document}